\documentclass[accepted]{uai2022} %

\usepackage[american]{babel}

\usepackage[sort&compress]{natbib} %
    \bibliographystyle{plainnat}
    
    \setlength{\bibsep}{0.5ex}
\usepackage{mathtools} %
\usepackage{booktabs} %
\usepackage{tikz} %

\usepackage{microtype}
\usepackage{graphicx}
\usepackage{subfigure}
\usepackage{multirow}
\usepackage{hyperref}

\usepackage{amsmath}
\usepackage{amssymb}
\usepackage{mathtools}
\usepackage{amsthm}
\usepackage{amstext}
\usepackage{babel}

\theoremstyle{plain}
\newtheorem{theorem}{Theorem}
\newtheorem{proposition}{Proposition}
\newtheorem{lemma}{Lemma}

\theoremstyle{definition}

\theoremstyle{remark}

\global\long\def\th{\theta}%
\global\long\def\D{D}%
\global\long\def\R{\mathfrak{R}}%
\global\long\def\E{\mathbb{E}}%
\global\long\def\B{\mathcal{B}}%
\global\long\def\P{\mathcal{P}}%
\DeclareMathOperator*{\argmin}{arg\,min}

\usepackage{algorithm}
\usepackage{algpseudocode}

\usepackage{titlesec}
\titlespacing{\section}{0.7\parskip}{0.7\parskip}{0.7\parskip}
\titlespacing{\subsection}{0.4\parskip}{0.4\parskip}{0.4\parskip}
\titlespacing{\subsubsection}{0.125\parskip}{0.125\parskip}{0.125\parskip}

\title{Future Gradient Descent for Adapting the Temporal Shifting Data Distribution in Online Recommendation Systems}

\author[1]{\href{mailto:<maoye21@utexas.edu>?Subject=Your UAI 2022 paper}{Mao Ye}{}}
\author[1]{Ruichen Jiang}
\author[2]{Haoxiang Wang}
\author[3]{Dhruv Choudhary}
\author[3]{Xiaocong Du}
\author[3]{Bhargav Bhushanam}
\author[1]{Aryan Mokhtari}
\author[3]{Arun Kejariwal}
\author[1]{Qiang Liu}
\affil[1]{%
    The University of Texas at Austin.
}

\affil[2]{%
    The University of Illinois at Urbana-Champaign.
}
\affil[3]{%
    Meta.
  }
  
  \begin{document}
\maketitle

\begin{abstract}
One of the key challenges of learning an online recommendation model is the temporal domain shift, which causes the mismatch between the training and testing data distribution and hence domain generalization error. To overcome, we propose to learn a meta future gradient generator that forecasts the gradient information of the future data distribution for training so that the recommendation model can be trained as if we were able to look ahead at the future of its deployment. Compared with Batch Update, a widely used paradigm, our theory suggests that the proposed algorithm achieves smaller temporal domain generalization error measured by a gradient variation term in a local regret. We demonstrate the empirical advantage by comparing with various representative baselines.
\end{abstract}

\section{Introduction}
The web-scale recommendation system is one of the most important modern machine learning applications that provides personalized content to billions of users from inventories of billions of items. These recommendation models have been rapidly growing in both computation and memory in the past few years due to wider-deeper networks and the use of sparse embedding layers.
\citep{ye2020adaptive,he2020lightgcn,zhang2020retrain,peng2021learning} have demonstrated the importance of updating the recommendation periodically (e.g., in a daily/weekly basis) as  new data arrives to avoid the model being stale in a domain shifting environment.

Designing such a periodical updating pipeline is non-trivial: the algorithm needs to achieve a good balance of consolidating the long-term memory (ensuring the useful past knowledge is preserved) and capturing short-term tendency, which is valuable for near future prediction \citep{zhang2020retrain,peng2021learning,deng2021deeplight}. Algorithms can be categorized into two groups:  1. The \emph{sample-based} approaches maintain a reservoir to reuse observed historical examples to preserve long-term memory \citep{diaz2012real}. Several heuristics are developed to select past examples via balancing the prioritizing of recency and forgetting \citep{chen2013terec,wang2018streaming,qiu2020gag,zhao2021stratified}; 2. The \emph{model-based} approaches maintain the long-term memory by transferring knowledge between the past and the current model checkpoints via knowledge distillation \citep{wang2020practical,xu2020graphsail,mi2020ader} and model fusion \citep{zhang2020retrain,peng2021learning}.

In this paper, we provide a novel perspective by framing the problem of learning under shifting domains as a \emph{temporal domain generalization problem}. We observe that the crux lies in the mismatch between the distribution of the training examples and the distribution of the testing example on which the model is deployed for recommendations. From this perspective, existing approaches mitigate such the crux by the distribution mismatch in an \emph{indirect}  way by training a robust model that is less vulnerable to the shift of domain in the near future by making it a master at both short-term and long-term signals. More precisely, we propose a more \emph{direct} solution towards the temporal domain generalization problem based on forecasting the future information for training. Consider the ideal case that we are able to access the data distribution in the near future when the model is deployed, simply training the model by gradient descent using examples drawn from the future data distribution should be desirable (see `Ideal Update' in Fig \ref{fig:compare}). In the real world when such future information is unavailable, we propose to train a meta future gradient generator to forecast the gradient of the future examples so that the recommendation model is trained as if we were able to look ahead at the future (i.e., `FGD Update' in Fig \ref{fig:compare}). In addition to the sample-based and model-based approach, our method is \emph{optimizer-based} in that the trainer of the recommendation model is improved.

In theory, we frame the problem as an online learning problem in which the temporal domain generalization error is captured by the gradient variation term \citep{chiang2012online,rakhlin2013online} in a local regret \citep{hazan2017efficient}. We provide a theoretical understanding of why the proposed algorithm improves over batch-update, a widely used training pipeline \citep{wang2020practical} and show that our method is able to achieve a similar regret as that of a fixed meta future gradient generator oracle. Empirically, we compare our approach against several representative sample/model-based approaches and observe considerable performance improvement.

\textbf{Notation.}
We denote the integer set $\{1,2...,b\}$ by $[b]$. Moreover,  $\|\cdot\|$ denotes the $\ell_2$ vector norm, $\|\cdot\|_1$ denotes the $\ell_1$ vector norm, and $S_b=\{a\in\mathbb{R}^{b}:a_{i}\ge0,\ \|a\|_{1}=1\}$ is the probability simplex set.

\begin{figure}
\begin{centering}
\includegraphics[scale=0.16]{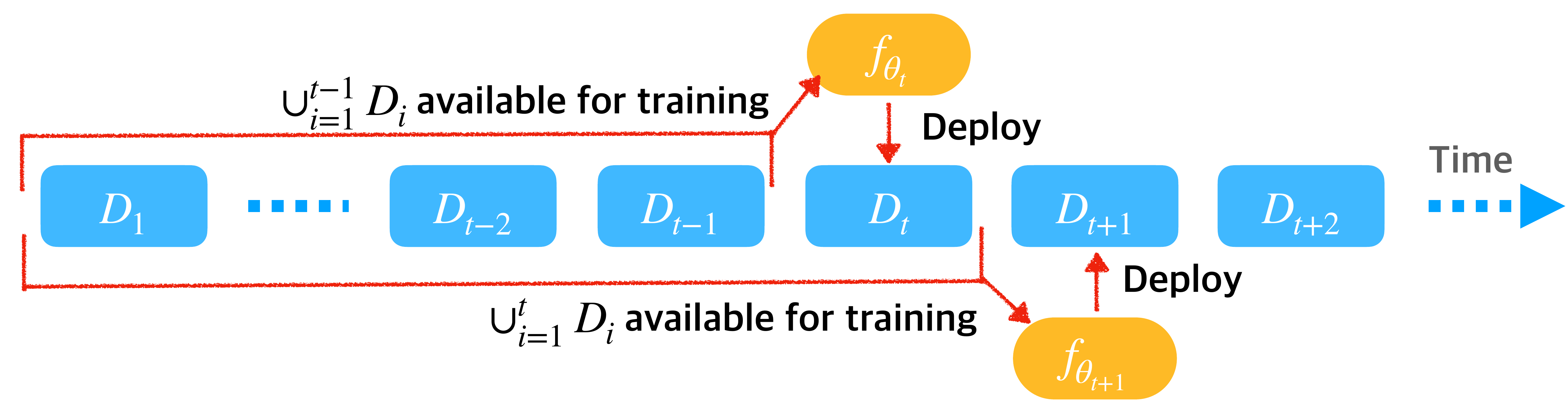}
\par\end{centering}
\vspace{-0.1cm}
\caption{Illustration of the temporal domain generalization problem where the distribution of the training set of $f_{\th_t}$ mismatches the distribution of its test set $\D_t$ at time $t$.}\label{fig:tgd}
\vspace{-0.2cm}
\end{figure}

\section{Problem and Background}
\paragraph{Temporal Domain Generalization.} Consider an online classification problem with the feature space $\mathcal{X}$ and the label space $\mathcal{Y}$.    
Our goal is to learn an accurate prediction model $f_\th:\mathcal{X} \rightarrow \mathcal{Y}$ parameterized by $\th \in \Theta$ from a stream of datasets $\D_1,\dots,\D_T$ in $T$ consecutive rounds.  
Specifically,  
at round $t$, we choose a model parameter $\th_t \in \Theta$ and deploy our prediction model $f_{\theta_t}$.  
Then we observe the dataset $\D_t$ with $n_t$ labeled examples $\D_t = \{(x_t^{(i)},y_t^{(i)})\}_{i=1}^{n_t}$ drawn from certain data distribution $\P_{t}$, where $x_t^{(i)}$ are the input features and $y_t^{(i)}$ is the associated label. 
Thus, for a given loss function $\ell:\mathcal{X}\times \mathcal{Y} \rightarrow \mathbb{R}_+$, the empirical loss of our prediction model at time $t$ is given by $r_{t}(\th)=\E_{(x,y)\sim\D_{t}}\ell(f_{\th}(x),y)$. 
Moreover, we consider the situation where the data distribution $\P_t$ (i.e., domain) is gradually changing over time. 
A natural performance metric for our learning algorithm is the temporal average of the test loss suffered by the prediction model: 
\begin{align} 
   \label{eq:dynamic_regret}
   \frac{1}{T}\sum_{t=1}^{T}r_{t}(\th_{t}).
\end{align}  
We remark that $T$, which denotes the total number of rounds in the online process, is typically large in practice. 

The key challenge is the temporal domain generalization. Indeed, at time $t$ we train our prediction model $f_{\th_t}$ using the observed examples $\cup_{i\in\{0,1,...,t-1\}}\D_{i}$ and due to temporal shift of domain, the distribution of the test set does not match the distribution of its training set. Such mismatch of the training and testing domains results in domain generalization error. See Fig~\ref{fig:tgd} for an illustration.

Our formulation is motivated by the online recommendation
systems that aim to advertise items to users given user features. The domain is gradually changing because of the flux in the content that gets continuously added/removed from the system \citep{he2014practical,ye2020adaptive}. As the recommendation model needs to be deployed for serving, it is hard to update its parameter in real time \citep{cervantes2018evaluating,wang2020practical,peng2021learning}. The training process is thus discretized in which the model parameter is updated periodically with the hope that it generalizes well in its test domain.

\begin{algorithm}[t!]
\caption{Batch and Incremental Update}\label{alg:buiu}
\begin{algorithmic}
\State \textbf{Input:} The learning rate $\eta$ for updating the parameter $\th$.
\For{$t\in [T]$}
    \State{Deploy the prediction model $f_{\th_{t}}$ with parameter $\th_t$.}
    \State{Collect the new dataset $\D_t$.}
    \State{Initialize $\th_{t+1}$.}
    \While{$\|\frac{1}{b}\sum_{i=0}^{b-1}\nabla r_{t-i}(\th_{t+1})\|\ge\delta$}
        \State{$\th_{t+1} \leftarrow \th_{t+1}-\eta\frac{1}{b}\sum_{i=0}^{b-1}\nabla r_{t-i}(\th_{t+1}).$}
    \EndWhile
\EndFor
\end{algorithmic}
\end{algorithm}
\begin{figure}[t!]
\begin{centering}
\includegraphics[scale=0.2]{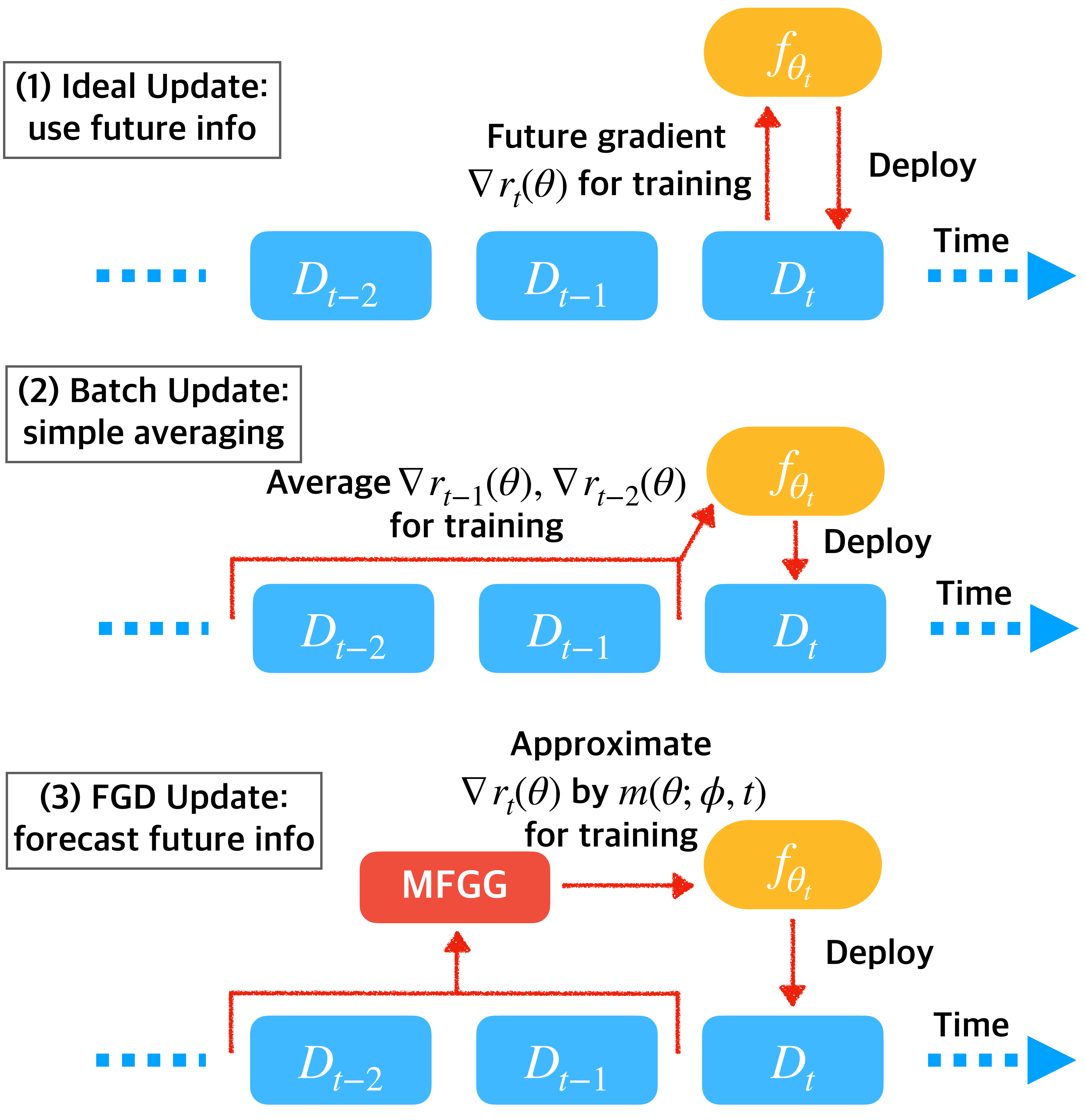}
\par\end{centering}
\caption{Comparing (1) the ideal update where the future information $\nabla r_t$ can be accessed at training time; (2) the batch update; and (3) our proposed approach.} \label{fig:compare}
\vspace{-0.5cm}
\end{figure}

\begin{algorithm*}[t]
\caption{Future Gradient Descent}\label{alg:main}
\begin{algorithmic}
\State \textbf{Input:} The learning rate $\eta$, $\eta_\phi$ for updating the model parameter $\th$ and $\phi$. The initial trajectory buffer $B$.
\For{$t\in [T]$}
    \State{Deploy the prediction model $f_{\th_{t}}$ with parameter $\th_t$. Then collect the new dataset $\D_t$.}
    \State{Initialize the parameter of MFGG $\phi_{t+1}$.} \Comment{Initialization of $\phi_{t+1}$ is user-specific.}       
    \For{Inner loop iteration $k\in K$} \Comment{Update the meta network.}
        \State{$\phi_{t+1} \leftarrow \phi_{t+1}-\eta_{\phi}\sum_{\th\in B}\nabla_{\phi}\|m(\th;\phi_{t+1},t)-\nabla r_{t}(\th)\|^{2}$.} \Comment{May replace with the mini-batch version.}
    \EndFor
    \State{Initialize the trajectory buffer $B = \emptyset$ and model parameter $\th_{t+1}$.} \Comment{Initialization scheme of $\th_{t+1}$ is specified by user.}
    \While{$\|m(\th_{t+1};\phi_{t+1},t+1)\|\ge\delta$} \Comment{Alternatively, we may run gradient descent with a fixed number of iterations.}
        \State{$\th_{t+1} \leftarrow \th_{t+1}-\eta m(\th_{t+1};\phi_{t+1},t+1)$.} \Comment{May replace with the mini-batch version.}
        \State{$B \leftarrow B \cup \{ \th_{t+1} \}$} \Comment{Alternatively, we may update the trajectory buffer $B$ every a few iterations.}
    \EndWhile
\EndFor
\end{algorithmic}
\end{algorithm*}
\paragraph{Batch and Incremental Update.}
Batch Update (BU) \citep{hazan2017efficient,wang2020practical} is a widely used updating pipeline for training the recommendation model in temporally shifting domains. At each time $t$, the model parameters are updated using the gradient of the averaged losses $r_{t},..., r_{t-b+1}$, where $b$ is a time window size indicating how many observed data are used. BU with $b=1$ is also named as Incremental Updating (IU). We summarize the pipeline in Algorithm \ref{alg:buiu}, where $\nabla r_{s}$ with $s \le 0$ is defined as $0$. Also see an illustration of BU with $b=2$ in the second plot of Fig.~\ref{fig:compare}. It is noteworthy that the initialization scheme of $\th_{t+1}'$ for the updating at each time is problem-dependent and user-specified. For example, we can set $\th_{t+1}' = \th_{t}$ of $\th_{t+1}'=\th_{t-b+1}$ if we consider one-pass training setting \citep{zheng2020shadowsync,ye2020adaptive,du2021alternate}.

\section{Method}\label{sec:method}

Recall that our goal is to learn $f_{\th_{t}}$
that gives accurate prediction on $\D_{t}$ (i.e. achieves small $r_{t}$).
Think of the ideal world where we were able to access the data $\D_{t}$
at the near future during the training time of $f_{\th_{t}}$, a simple
while promising approach is to apply gradient descent using $\nabla r_{t}$
(see the first plot in Fig. \ref{fig:compare}). In the real case where the future information $\nabla r_{t}$
is no more available, we propose to learn a meta future gradient generator (MFGG) that forecasts $\nabla r_{t}$ given the observed data $\cup_{i=1}^{t-1}\D_{i}$; see the third plot in Fig~\ref{fig:compare}.

\paragraph{Architecture of MFGG.}
MFGG models $\nabla r_{t}(\th)$ as an non-linear functional auto-regressive time series model \citep{bosq2000linear}. It approximates $\nabla r_{t}(\th)$ by aggregating the gradient
based on the latest $b$ losses $\sum_{i=0}^{b-1}a_{i}(\D_{t-b},...,\D_{t-1})\nabla r_{t-1-i}(\th)$
where the coefficient of the linear combination $a_{i}(\D_{t-b},...,\D_{t-1})$
is a neural network given by the following computation graph.
\begin{align*}
e_{i,j} & =\text{Embd}(x_{j}^{(i)})\in\mathbb{R}^{d_{1}}\\
e_{j} & =\sum_{i\in[n_{j}]}e_{i,j}\in\mathbb{R}^{d_{1}}\\
z & =\text{Self Attention}(e_{t-b},...,e_{t-1})\in\mathbb{R}^{d_{2}\times b}\\
a & =\text{Softmax}\circ\text{MLP}(z_{t-b},...,z_{t-1})\in\mathbb{R}^{b}
\end{align*}
Here Embd denotes the embedding layer that maps the categorical feature into a continuous embedding space (the continuous feature remains the same in this layer); Self Attention denotes the self attention layer \citep{vaswani2017attention}; MLP denotes the multi-layer perception. MFGG first extracts the domain features $e_j$ over $j\in\{t-b, ..., t-1\}$ of the last $b$ domains) and the self attention then encodes the interaction between the domain features, of which the outcomes are fed into the subsequent layers to calculate the coefficient $a$. The softmax layer is option and regularizes $a$ to be in a probability simplex $S_b$ and hence ensures the magnitude of the generated gradient is within a proper range. Suppose $\phi$ unions all the parameters, we denote MFGG as $m(\th;\phi,t)$. In practice, we can simply replace $\D_j$ with its mini-batch samples $\hat{\D}_j$, which gives a stochastic gradient version for updating.

\paragraph{Optimization of MFGG.}
We use the squared $\ell_2$ loss $\|m(\th;\phi,t)-\nabla r_{t}(\th)\|^{2}$ for measuring the prediction error of $m(\th;\phi,t)$ at time $t$. Such error depends on both $\phi$, the parameter of MFGG and $\th$, the parameter of recommendation model used for calculating the gradient. We are more interested in make MFGG accurate at a small subset of the model parameter space $\Theta$ in which $\th$ gives a recommendation model with good performance. We thus only apply the $\ell_2$ loss on the (sub-sampled) optimization trajectory of $\th$, which we denoted as $B$. That is, we learn $m(\th;\phi,t)$ by apply gradient descent on 
\[
\sum_{\th\in B}\|m(\th;\phi,t)-\nabla r_{t}(\th)\|^{2}.
\]
Note that here when calculating the gradient of $\phi$, $\th$ is viewed as a constant and hence the differentiation of $\phi$ at $\th$ does not applied. Algorithm \ref{alg:main} summarizes the detailed procedure. Again, a mini-batch version of $m(\th;\phi,t)$ and $\nabla r_t(\th)$ can be used during the training of MFGG. In practice at $t\le b$, we don not have enough historical data to compute MFGG, we can simply use IU for training (alternatively, data for offline training can be used instead). Since our approach uses the MFGG to predict the gradient of the loss on the unobserved future data, we name it Future Gradient Descent (FGD).

\paragraph{Extension to a smoothed loss.}
In practice, one might be interested in a smoothed version of performance metric as it is observed to be a potentially more robust evaluation metric in practice \citep{he2014practical}. More precisely, consider the loss function
\begin{equation} \label{eq:dynamic_regret_smooth}
   \frac{1}{T}\sum_{t=1}^{T}\left[\frac{1}{w}\sum_{i=0}^{w-1}r_{t-i}(\th_{t})\right],
\end{equation}
where $r_s$ is identically zero for $s\le0$. This smoothed loss in (\ref{eq:dynamic_regret_smooth}) uses a sliding window with width $w$ over the previous datasets $\cup_{i=0}^{w-1} D_{t-i}$ when evaluating. We are mainly interested in the standard metric (\ref{eq:dynamic_regret}) but when (\ref{eq:dynamic_regret_smooth}) is considered, we can simply generalize FGD by replacing $m(\th;\phi,t)$ by
\[
\bar{m}(\th;\phi,t)=\frac{1}{w}\left(m(\th;\phi,t)+\sum_{i=1}^{w-1}\nabla r_{t-i}(\th)\right),
\]
when training $\th$. Here $\nabla r_s$, $s\le0$ is defined 0. We refer readers to Algorithm \ref{alg:main_generalized} in Appendix \ref{apx:generalize_fgd} for the details. In the rest of the paper, we focus on the smoothed version of loss as it is more general.

Before moving forward, we emphasize the difference between the two window sizes $b$ and $w$ that appear in the BU/FGD and in the definition of \eqref{eq:dynamic_regret_smooth}, respectively. In some sense, $b$ corresponds to the number of recently observed datasets used for training the model. While, $w$ represents the number of datasets used for testing the model.
\section{Theory}
In this section, we study the advantage of the proposed FGD over BU and IU theoretically using recent advances in non-convex online learning. Specifically, we show that FGD is able to perform better than BU and IU in terms of the so-called \textit{local regret} \citep{hazan2017efficient,hallak2021regret}, which measures the algorithm's performance by comparing it with the best one can achieve in hindsight.

\subsection{Local Regret}
To upper bound the average loss in \eqref{eq:dynamic_regret} in a changing environment, one standard approach is to study the average dynamic regret \citep{zinkevich2003online}: 
\begin{align} \label{eq:convex_regret}
    \frac{1}{T}\sum_{t=1}^{T}[r_{t}(\th_{t})-\min_{\th\in\Theta}r_{t}(\th)],
\end{align}
which uses the global minimum of $r_{t}$ as a benchmark when evaluating the performance at time $t$.
However, in modern recommendation systems the prediction model $f_\th$ is given by a deep neural network, and thus the resulting loss function $r_{t}(\theta)$ is highly non-convex. This means finding an approximate global minimum of $r_{t}$ is computationally intractable, making it hopeless to derive any meaningful bound on the average dynamic regret in \eqref{eq:convex_regret}.  
To remedy this issue, we adopt the notion of \emph{local regret} proposed by \cite{hazan2017efficient}. 
Specifically, given $\{\th_{t}\}_{t=1}^{T}$ generated
by an online learning algorithm, the average \emph{local regret} is defined as %
\begin{equation}\label{eq:local_regret}
\R(T):=\frac{1}{T}\sum_{t=1}^{T}\|\nabla r_{t}(\th_{t})\|^{2}.
\end{equation}
Compared with \eqref{eq:convex_regret}, in \eqref{eq:local_regret} we evaluate the model parameters in terms of the first-order stationarity, and thus it can be viewed as the non-convex counterpart of the dynamic regret in \eqref{eq:convex_regret}. 
In particular, a small value of $\R(T)$ implies a small gradient on average, 
suggesting that the algorithm achieves near-optimal performance locally in the long run. %

More generally, when the smoothed loss \eqref{eq:dynamic_regret_smooth} is considered, one can use the average \emph{$w$-local regret} accordingly as in \citep{hazan2017efficient}:
\[
\R_{w}(T):=\frac{1}{T}\sum_{t=1}^{T}\|\nabla u_{w,t}(\th_{t})\|^{2}, 
\]
where we evaluate $\th_t$ using the smoothed loss function $u_{w,t}(\th):=\frac{1}{w}\sum_{i=0}^{w-1}r_{t-i}(\th)$. 
In the following, we will focus our analysis on $\R_{w}(T)$, as choosing $w=1$ also covers the standard local regret in \eqref{eq:local_regret}.

\subsection{Regret of Batch Update}
In \citep{hazan2017efficient}, the authors analyzed the average $w$-local regret $\R_w(T)$ for BU. We recall their result below but offer a different interpretation from the domain generalization perspective. %
\begin{proposition} [\citep{hazan2017efficient,hallak2021regret}]\label{prop:BU}
With the choice of the window size $b=w$, 
the $w$-local regret incurred by BU in Algorithm~\ref{alg:buiu} satisfies
\begin{equation*}%
    \begin{aligned}
        \R_{w}(T) & \le\underset{\text{optimization error}}{\underbrace{2{\textstyle \sum_{t=1}^{T}}
        \| \nabla u_{w,t-1}(\theta_t)
        \|^{2}/T}
        }+\underset{\text{\text{domain generalization}}}{\underbrace{2V_{w}(T)/w^{2}}}\\
         & \le2\delta^{2}+\frac{2}{w^{2}}V_{w}(T), \\
        \text{where\  } & V_{w}(T)=\frac{1}{T}\sum_{t=1}^{T}\sup_{\th}\|\nabla r_{t}(\th)-\nabla r_{t-w}(\th)\|^{2}.
        \end{aligned}
\end{equation*} 
Furthermore, if $\|\nabla r_t(\th)\| \leq M<\infty$ for all $\th\in \Theta$ and $t \geq 0$, choosing $\delta = O(1/w)$ gives $\R_w(T)=O(1/w^2)$,
which is minimax optimal.
\end{proposition}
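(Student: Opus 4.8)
The plan is to compare the smoothed loss $u_{w,t}$ that the regret measures at the deployed iterate $\th_t$ against the smoothed loss $u_{w,t-1}$ that BU actually drives to near-stationarity when it produces $\th_t$. With the choice $b=w$, the averaged gradient in the stopping criterion of Algorithm~\ref{alg:buiu} is exactly $\nabla u_{w,t-1}$, so the while-loop guarantees $\|\nabla u_{w,t-1}(\th_t)\|<\delta$ for $t\ge 2$, while for $t=1$ the convention $r_s\equiv 0$ ($s\le 0$) makes $u_{w,0}\equiv 0$ and the bound holds trivially. The second ingredient is a telescoping identity between consecutive windows: the two averages share all but the newest and oldest dataset, so $u_{w,t}(\th)-u_{w,t-1}(\th)=\frac{1}{w}\bigl(r_t(\th)-r_{t-w}(\th)\bigr)$ and therefore $\nabla u_{w,t}(\th)-\nabla u_{w,t-1}(\th)=\frac{1}{w}\bigl(\nabla r_t(\th)-\nabla r_{t-w}(\th)\bigr)$.

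Combining the two, I would write $\nabla u_{w,t}(\th_t)=\nabla u_{w,t-1}(\th_t)+\frac{1}{w}\bigl(\nabla r_t(\th_t)-\nabla r_{t-w}(\th_t)\bigr)$ and apply $\|a+b\|^2\le 2\|a\|^2+2\|b\|^2$ to get
\[
\|\nabla u_{w,t}(\th_t)\|^2\le 2\|\nabla u_{w,t-1}(\th_t)\|^2+\frac{2}{w^2}\|\nabla r_t(\th_t)-\nabla r_{t-w}(\th_t)\|^2.
\]
Averaging over $t\in[T]$ and recalling $\R_w(T)=\frac{1}{T}\sum_{t=1}^T\|\nabla u_{w,t}(\th_t)\|^2$ produces exactly the two labeled terms: the first is the optimization error, and the second is at most $\frac{2}{w^2}V_w(T)$ after replacing the pointwise gradient variation by its supremum over $\th$ from the definition of $V_w(T)$. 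Bounding each summand of the first term by $\delta^2$ via the stopping criterion then yields the second inequality $\R_w(T)\le 2\delta^2+\frac{2}{w^2}V_w(T)$.

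For the \emph{furthermore} claim, under $\|\nabla r_t(\th)\|\le M$ the triangle inequality gives $\sup_{\th}\|\nabla r_t(\th)-\nabla r_{t-w}(\th)\|\le 2M$, hence $V_w(T)\le 4M^2$ and the bound reads $\R_w(T)\le 2\delta^2+8M^2/w^2$; taking $\delta=O(1/w)$ balances the two contributions at $O(1/w^2)$. I would not re-derive the matching minimax lower bound that certifies optimality, but invoke it directly from \citep{hazan2017efficient,hallak2021regret}.

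The calculation is essentially routine once the right quantity is tracked, so the main obstacle is bookkeeping rather than analysis: one must align the index of the loss the regret measures ($u_{w,t}$ at $\th_t$) with the index of the loss the algorithm optimizes ($u_{w,t-1}$ at $\th_t$), and check that the $r_s\equiv 0$ convention keeps both the telescoping identity and the stopping-criterion bound valid through the warm-up rounds $t\le w$. The only genuinely nontrivial external input is the optimality half of the statement.
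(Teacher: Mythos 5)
Your proof is correct and follows essentially the same route as the paper: although the paper cites Proposition~\ref{prop:BU} from prior work, its proof of the generalization (Theorem~\ref{thm:fixed_meta_regret}) uses exactly your decomposition---writing $\nabla u_{w,t}(\th_t)$ as the quantity driven below $\delta$ by the stopping criterion, namely $\nabla u_{w,t-1}(\th_t)$ (i.e., $\bar{m}(\th_t;t)$ with $m(\cdot;t)=\nabla r_{t-w}$), plus the window difference $\frac{1}{w}\bigl(\nabla r_t(\th_t)-\nabla r_{t-w}(\th_t)\bigr)$, followed by $\|a+b\|^2\le 2\|a\|^2+2\|b\|^2$ and the supremum over $\th$. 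Your treatment of the warm-up rounds via the $r_s\equiv 0$ convention and the boundedness argument ($V_w(T)\le 4M^2$, $\delta=O(1/w)$) for the furthermore claim likewise match the paper's, with minimax optimality invoked from the cited references in both cases.
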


The previous works \citep{hazan2017efficient,hallak2021regret} are  interested in the worst-case guarantee of the BU algorithm, and the result in Proposition \ref{prop:BU} only serves as an intermediate result. However, we observe that this regret bound also offers interesting insights from the perspective of domain generalization. To be specific, we can decompose it into two terms: 

\emph{The optimization error}: this is due to the fact that we only seek a $\delta$-approximate stationary point of the smoothed training loss function $u_{w,t-1}(\theta)$ at round $t$.  It is controllable in the sense that $\delta$ can be made arbitrarily small by running more iterations of gradient descent. Indeed, under standard smoothness assumption on $r_i$, we can achieve $\|\nabla u_{w,t-1}(\theta_t)\|\le\delta$ within $O(\delta^{-1})$ iterations. The optimization error term thus corresponds to how well we train the recommendation model in each round.

\emph{The domain generalization error}: this is due to the fact that the the test set $\cup_{i=0}^{w-1} D_{t-i}$ for evaluating $\th_t$ is different from the training set $\cup_{i=1}^{w} D_{t-i}$. It is typically the dominant term in the regret bound and will not vanish even when $\delta=0$. 
In some sense, it captures the level of variability in the data distributions, similar to the gradient variation term in \citep{chiang2012online,rakhlin2013online}. We also note that the domain generalization error decreases w.r.t. $w$. This is because when $w$ increases, the overlap between the
training set and the test set becomes larger (i.e., the training set and the test set deviate less)\footnote{Such overlapping mechanism is the key to defending adversaries in non-convex games and we refer to Section 2.3 in \citep{hazan2017efficient} for more details.}.

In summary, the optimization error term characterizes how well our model performs on the training set, while the domain generalization error term characterizes how much the test set deviates from the training set.

\paragraph{Comparison with other measure of domain divergence.}
In Proposition \ref{prop:BU}, the domain discrepancy is characterized in terms of the gradient variation (i.e, how much the gradient of the loss functions differs). Some other domain discrepancy measures have also been proposed. Examples include the $\mathcal{H}$-divergence \citep{kifer2004detecting} between $\D$ and $\D'$ defined
as $d_{\mathcal{H}}(\D,\D')=\sup_{\th}\|\mathbb{E}_{\mathcal{D}}\ell(f_{\th}(x),y)-\mathbb{E}_{\mathcal{D}'}\ell(f_{\th}(x),y)\|$
and the $\mathcal{H}\Delta\mathcal{H}$ divergence \citep{ben2010theory} defined as $d_{\mathcal{H}\Delta\mathcal{H}}(\D,\D')=\sup_{\th,\th'}\|\mathbb{E}_{\D}\ell(f_{\th},f_{\th'})-\E_{\D'}\ell(f_{\th},f_{\th'})\|$.
Overall, the commonly used divergence measures share the general form
of $\sup_{\th}\|\mathbb{E}_{\mathcal{D}}g_{\th}-\mathbb{E}_{\mathcal{D}'}g_{\th}\|$, where $g_{\th}$ is a test function parameterized by $\th$. The $\mathcal{H}$-divergence uses $g_{\th}=\ell(f_{\th}(x),y)$
and the $\mathcal{H}\Delta\mathcal{H}$ divergence first extends the
parameter space $\Theta$ to the product space $\Theta\otimes\Theta$
and let $g_{(\th,\th')}=\ell(f_{\th}(x),f_{\th'}(x))$ for any $(\th,\th')\in\Theta\otimes\Theta$.
The gradient variation uses $g_{\th}=\nabla_{\th}\ell(f_{\th},y)$. As
we consider the local regret for non-convex problems where the goal is to find a first-order stationary point,
using the gradient as the test function is a natural fit.

\subsection{The Headroom of Batch Update} \label{sec:headroom}
In the last section, we see that BU achieves the minimax regret, so at first sight it seems there is no room for further improvement. However, we note that this only implies that BU is optimal in the \emph{worst-case sense}, i.e., when the future data distribution is completely uncorrelated with the previous ones. 
This is hardly the case in reality: the drift in the data distribution normally happens in a gradual manner, and the data distribution in the past should be informative of the future.  
Hence, the natural question is: can we do better than BU in a gradually changing environment?

The discussion after Proposition~\ref{prop:BU} suggests that the only hope for improvement lies in reducing the domain generalization error $V_{w}(T)$. To illustrate the headroom, we start with Meta Gradient Descent (MGD), a `helper algorithm' that extends BU and serves as an intermediate step towards the proposed FGD. Assume that we are given a sequence of gradient generators $\{m(\cdot;t)\}_{t=1}^T$. 
Then FGD uses a smoothed gradient generator given by 
\[
\bar{m}(\th;t)=\frac{1}{w}\left(m(\th;t)+\sum_{i=1}^{w-1}\nabla r_{t-i}(\th)\right),
\]
for updating, yielding Algorithm \ref{alg:fgd_wo_meta}. 
\begin{algorithm}[t!]
\caption{Meta Gradient Descent: a helper algorithm}\label{alg:fgd_wo_meta}
\begin{algorithmic}
\State \textbf{Input:} The learning rate $\eta$ for updating the parameter $\th$.
\For{$t\in [T]$}
    \State{Deploy the prediction model $f_{\th_{t}}$ with parameter $\th_t$.}
    \State{Collect the new dataset $\D_t$.}
    \State{Construct the smoothed gradient generator $\bar{m}(\cdot;t+1)$.}
    \State{Initialize $\th_{t+1}$.}
    \While{$\|\bar{m}(\th_{t+1};t+1)\|\ge\delta$}
        \State{$\th_{t+1} \leftarrow \th_{t+1}-\eta \bar{m}(\th_{t+1};t+1)$}
    \EndWhile
\EndFor
\end{algorithmic}
\end{algorithm}

By substituting $\nabla r_{t-w}(\cdot)$ for $m(\cdot,t)$,  MGD reduces to BU with $b=w$. Comparing $\bar{m}(\cdot;t)$ with $\nabla u_{w,t}$, the true gradient on the test set, we see that 
\begin{equation}\label{eq:GG_diff}
\bar{m}(\th;t)-\nabla u_{w,t}(\th)=\frac{1}{w}(m(\th;t)-\nabla r_{t}(\th)),
\end{equation}
suggesting that MGD introduces a general gradient generator $m(\th;t)$ as a proxy for $\nabla r_{t}(\th)$, similar to FGD. On the other hand, we note that the gradient generator in MGD is pre-specified, while FGD parametrizes the gradient generator $m$ with $\phi$ and optimizes it on the fly.

From this perspective, BU in Algorithm \ref{alg:buiu} in fact implicitly uses $m(\cdot,t) = \nabla r_{t-w}$ to approximate $\nabla r_t$, which explains why $V_w(T)$ depends on the difference between these two terms. While such design makes sense in the very limited case where the sequence of domains is known to have a period of $w$, it might not be a savvy choice in general. To be specific, one can construct $m$ from the observed datasets $\D_{t-1},...,\D_{t-b}$ based on some mapping parameterized by $\phi \in \Phi$. For instance, such mapping can be given by a deep neural network as described in Section \ref{sec:method}. In this way, MGD enables a mechanism that  utilizes the past domains more flexibly to predict the future gradient information $\nabla r_t$ when it can be forecasted with a more general form.

\begin{theorem} \label{thm:fixed_meta_regret}
The $w$-local regret incurred by Algorithm \ref{alg:fgd_wo_meta} satisfies
\begin{equation*}
 \R_{w}(T)\le 2\delta^{2}+\frac{2}{w^{2}}Q(T;m),%
\end{equation*}
where $Q(T;m):=\frac{1}{T}\sum_{t=1}^{T}\sup_{\th}\|\nabla r_{t}(\th)-m(\th;t)\|^{2}$. Furthermore, if  both $\|\nabla r_t\|$ and $\| m(\cdot;t)\|$ are upper bounded by $M<\infty$ for all $\th\in \Theta$ and $t\geq 0$, we recover the minimax regret $\R_w(T) = O(1/w^2)$ when $\delta=O(1/w)$.
\end{theorem}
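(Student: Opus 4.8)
The plan is to mirror the reasoning behind Proposition~\ref{prop:BU}, simply replacing the implicit BU choice $m(\cdot;t)=\nabla r_{t-w}$ by the general generator $m(\cdot;t)$; this swaps the gradient-variation term $V_w(T)$ for the generator-error term $Q(T;m)$. The only structural ingredient I need is the identity \eqref{eq:GG_diff} relating the smoothed generator $\bar m$ to the true smoothed-loss gradient $\nabla u_{w,t}$, together with the stopping criterion of the inner while loop in Algorithm~\ref{alg:fgd_wo_meta}.

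First I would fix a round $t$ and rearrange \eqref{eq:GG_diff} into
\[
\nabla u_{w,t}(\th)=\bar m(\th;t)-\tfrac{1}{w}\bigl(m(\th;t)-\nabla r_{t}(\th)\bigr).
\]
Evaluating at the deployed iterate $\th_t$ and applying the triangle inequality gives
\[
\|\nabla u_{w,t}(\th_t)\|\le\|\bar m(\th_t;t)\|+\tfrac{1}{w}\|m(\th_t;t)-\nabla r_{t}(\th_t)\|.
\]
The key step is to invoke the termination condition: since $\th_t$ is returned by the inner loop at round $t-1$, it satisfies $\|\bar m(\th_t;t)\|<\delta$ (I assume $\th_1$ is warm-started to obey the same bound, so no boundary correction is needed). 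Squaring and using $(a+b)^2\le 2a^2+2b^2$ then yields
\[
\|\nabla u_{w,t}(\th_t)\|^{2}\le 2\delta^{2}+\tfrac{2}{w^{2}}\|m(\th_t;t)-\nabla r_{t}(\th_t)\|^{2}.
\]

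Next I would bound the pointwise generator error by its supremum, $\|m(\th_t;t)-\nabla r_t(\th_t)\|^2\le\sup_{\th}\|m(\th;t)-\nabla r_t(\th)\|^2$, and average over $t=1,\dots,T$. By the definitions of $\R_w(T)$ and $Q(T;m)$ this is exactly $\R_w(T)\le 2\delta^2+\frac{2}{w^2}Q(T;m)$, giving the first claim. For the second claim, the triangle inequality with $\|\nabla r_t\|\le M$ and $\|m(\cdot;t)\|\le M$ gives $\|m(\th;t)-\nabla r_t(\th)\|\le 2M$ uniformly, so $Q(T;m)\le 4M^2=O(1)$; substituting $\delta=O(1/w)$ leaves $\R_w(T)\le 2\delta^2+\frac{8M^2}{w^2}=O(1/w^2)$.

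The arithmetic is routine once \eqref{eq:GG_diff} is available; the one genuinely delicate point is the use of the stopping criterion, which presumes the inner while loop actually terminates at a $\th_t$ with $\|\bar m(\th_t;t)\|<\delta$. Unlike BU, where $\bar m=\nabla u_{w,t}$ is a true gradient field and descent provably reaches a $\delta$-stationary point under smoothness, here $m(\cdot;t)$ is arbitrary and $\bar m$ need not be conservative, so termination is not automatic. I expect this to be where the most care is required: I would either state termination as a standing assumption (making the bound conditional on the loop's output) or impose enough regularity on $m$ (e.g.\ Lipschitzness) so that the fixed-iteration variant provably crosses the $\delta$ threshold.
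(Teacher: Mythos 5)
Your proposal is correct and follows essentially the same argument as the paper's own proof: rearrange the identity \eqref{eq:GG_diff}, apply the triangle inequality at $\th_t$ together with the stopping criterion $\|\bar m(\th_t;t)\|\le\delta$, square via $(a+b)^2\le 2a^2+2b^2$, pass to the supremum and average over $t$, with the same $4M^2$ boundedness bound for the minimax claim. Your closing caveat about termination of the inner loop when $\bar m$ is not a conservative field is a fair observation, but the paper makes the same implicit assumption, so it does not distinguish your route from theirs.
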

Theorem~\ref{thm:fixed_meta_regret} shows that we can greatly improve the regret of BU by reducing the domain generalization error $Q(T;m)$ if $m$ is properly chosen. Specifically, suppose that $\mathcal{M}$---the hypothesis class of $m$---is rich enough to model the dynamic of the data distribution, in the sense that there exists $ m^*\in \mathcal{M}$ satisfying 
\begin{equation*}
    Q(T;m^*) := \frac{1}{T}\sum_{t=1}^{T}\sup_{\th}\|\nabla r_{t}(\th)- m^*(\th;t)\|^{2} = O\left(\frac{1}{T}\right).
\end{equation*}
Then the domain generalization error of MGD equipped with $m^*$ tends to zero at the rate of $1/T$, in contrast to being a non-vanishing dominant term in BU.  
On the other hand, we can still maintain essentially the same regret bound as BU in the worst case, and thus the improvement almost comes for free.

In the following section, we show that it is indeed possible for FGD to achieve a comparable local regret bound as the one given by MGD with the optimal gradient generator $m^*$ in $\mathcal{M}$.

\subsection{Regret Bound of FGD} \label{sec:opt_meta}

To simplify the analysis, we consider the case
where the gradient generator at round $t$ is given by a linear model: 
\begin{equation}\label{eq:linear_model}
    m(\th;\phi,t)=\sum_{i=1}^{b}a_{i}\nabla r_{t-i}(\th),
\end{equation}
where $\phi=[a_{1},...,a_{b}]\in S_b$ is the parameter. The hypothesis class $\mathcal{M}$ is thus $\mathcal{M}=\{\{m(\cdot;\phi,t)\}_{t=1}^{T}:\sum_{i=1}^{b}a_{i}\nabla r_{t-i}(\cdot),\ \phi\in S_{b}\}$. This family of FGD algorithm covers the BU algorithm, which corresponds to setting $a_{b}=1$ and $a_{i}=0$ otherwise. For this toy example, we use the classic exponentiated gradient descent method \citep{kivinen97exponentiated} to update $\phi$, which ensures that $\phi \in S_b$. The detailed algorithm is summarized in Algorithm \ref{alg:main_simple} in Appendix \ref{apx:simple_fgd}.

\begin{theorem} \label{thm:opt_meta_regret}
Assume that for any $t$, $\|\nabla r_{t}\|$ is bounded by $M<\infty$. Let $\mathcal{M}$ be the hypothesis class of $m$ given in \eqref{eq:linear_model}. For any given constant $c>0$, if we set the learning rate for updating $m$ as $\eta_\phi=c\sqrt{(\log b)/(TM^{4})}$, the $w$-local regret incurred by Algorithm \ref{alg:main_simple} in Appendix \ref{apx:simple_fgd} satisfies 
\begin{align*}
\R_{w}(T) & \le 2\delta^{2}+\frac{2}{w^{2}}(Q(T;m^*)+O(M^2\sqrt{\log b/T})),
\\
\text{where } & Q(T;m^*) = \min_{m\in \mathcal{M}}\sum_{t=1}^{T} \sup_{\th}\|\nabla r_{t}(\th)-m(\th;\phi,t)\|^{2}.
\end{align*}
\end{theorem}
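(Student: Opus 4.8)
The plan is to recast the update of the mixing weights $\phi$ as an online convex optimization (OCO) problem on the simplex and then quote the regret of exponentiated gradient (EG) descent. The first step is purely bookkeeping: Theorem~\ref{thm:fixed_meta_regret} holds pathwise for an \emph{arbitrary} sequence of generators $\{m(\cdot;t)\}$, so I would apply it with the realized choice $m(\cdot;t):=m(\cdot;\phi_t,t)$ produced by Algorithm~\ref{alg:main_simple}. This immediately yields
\[
\R_w(T)\le 2\delta^2+\frac{2}{w^2}\cdot\frac1T\sum_{t=1}^T \ell_t(\phi_t),\qquad \ell_t(\phi):=\sup_{\th}\|m(\th;\phi,t)-\nabla r_t(\th)\|^2,
\]
so everything reduces to bounding the cumulative loss $\sum_{t=1}^T\ell_t(\phi_t)$ of the learner that plays $\phi_t$, observes $\D_t$ (hence $\nabla r_t$), and then updates to $\phi_{t+1}$ by EG.

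Next I would verify that $\{\ell_t\}$ is a sequence of convex losses with uniformly bounded subgradients, so that OCO applies. Writing $A_{t,\th}:=[\nabla r_{t-1}(\th),\dots,\nabla r_{t-b}(\th)]$, the residual $m(\th;\phi,t)-\nabla r_t(\th)=A_{t,\th}\phi-\nabla r_t(\th)$ is affine in $\phi$, so $\phi\mapsto\|A_{t,\th}\phi-\nabla r_t(\th)\|^2$ is convex for each fixed $\th$ and the pointwise supremum $\ell_t$ is convex. Danskin's theorem supplies a subgradient at $\phi_t$: fixing a maximizer $\th_t^\star$ of the inner problem, $[g_t]_i=2\langle\nabla r_{t-i}(\th_t^\star),\,m(\th_t^\star;\phi_t,t)-\nabla r_t(\th_t^\star)\rangle$. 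Here the simplex constraint $\phi\in S_b$ is essential: it forces $\|m(\th;\phi,t)\|\le\sum_i a_i\|\nabla r_{t-i}(\th)\|\le M$, hence $\|m(\th;\phi,t)-\nabla r_t(\th)\|\le 2M$, and combined with $\|\nabla r_{t-i}\|\le M$ it gives the uniform bound $\|g_t\|_\infty\le 4M^2$ independent of the maximizer $\th_t^\star$.

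The third step is to plug these facts into the textbook EG/mirror-descent guarantee on $S_b$ with the entropic regularizer and uniform initialization,
\[
\sum_{t=1}^T\ell_t(\phi_t)-\min_{\phi\in S_b}\sum_{t=1}^T\ell_t(\phi)\le\frac{\log b}{\eta_\phi}+\frac{\eta_\phi}{2}\sum_{t=1}^T\|g_t\|_\infty^2 .
\]
Substituting $\|g_t\|_\infty\le 4M^2$ and the prescribed $\eta_\phi=c\sqrt{(\log b)/(TM^4)}$ balances the two terms and gives a regret of order $(\tfrac1c+8c)M^2\sqrt{T\log b}=O(M^2\sqrt{T\log b})$. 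Dividing by $T$ and identifying $\min_{\phi\in S_b}\frac1T\sum_t\ell_t(\phi)$ with $Q(T;m^*)$ (using the $1/T$ normalization consistent with Theorem~\ref{thm:fixed_meta_regret}), I obtain $\frac1T\sum_t\ell_t(\phi_t)\le Q(T;m^*)+O(M^2\sqrt{\log b/T})$. Substituting this back into the display of the first step yields the claimed bound.

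I expect the main obstacle to be the inner $\sup_{\th}$ in $\ell_t$, which makes each loss a minimax object rather than a plain function of $\phi$. The resolution has two parts. Convexity is not threatened, because the residual is affine in $\phi$ for every fixed $\th$ and a supremum of convex functions is convex; the delicate point is obtaining a subgradient bound that is uniform over $\th$, since the worst-case $\th_t^\star$ can move with $t$ and with $\phi$. This is exactly where the probability-simplex parametrization pays off: it caps the magnitude of the generated gradient at $M$ regardless of $\th$, so the Danskin subgradient inherits the $\ell_\infty$ bound $4M^2$ uniformly, and the EG regret term stays $O(M^2\sqrt{T\log b})$ with only logarithmic dependence on $b$. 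Once this uniform bound is in hand, the remaining steps are routine OCO.
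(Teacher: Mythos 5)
Your high-level architecture matches the paper's: reduce the bound on $\R_w(T)$ to an online convex optimization problem for $\phi$ on the simplex via the pathwise inequality behind Theorem~\ref{thm:fixed_meta_regret}, invoke the exponentiated-gradient regret bound with the entropic regularizer, and tune $\eta_\phi$ to balance $\log b/\eta_\phi$ against $\eta_\phi M^4T$; your EG mechanics, gradient bound, and step-size calculation are all sound. However, there is a genuine gap: you run the OCO argument on the sup-based losses $\ell_t(\phi)=\sup_{\th}\|m(\th;\phi,t)-\nabla r_t(\th)\|^2$ and feed EG the Danskin subgradient at a maximizer $\th_t^\star$, but that is not the algorithm the theorem is about. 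Algorithm~\ref{alg:main_simple} constructs $h_t(\phi)=\|\nabla r_t(\th_t)-m(\th_t;\phi,t)\|^2$, i.e.\ the loss evaluated at the current iterate $\th_t$ rather than a supremum over $\th$, and updates $\phi_{t+1}\propto\phi_t\circ\exp(-\eta_\phi\nabla h_t(\phi_t))$. Since $h_t\le\ell_t$ pointwise but $\nabla h_t(\phi_t)$ is in general not a subgradient of $\ell_t$ at $\phi_t$ (that would require $\th_t$ to attain the supremum), the linearization step $\ell_t(\phi_t)-\ell_t(\phi)\le\langle g_t,\phi_t-\phi\rangle$ fails for the update direction the algorithm actually uses, so your bound on $\sum_t\ell_t(\phi_t)$ does not apply to its iterates. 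What your argument proves is a regret bound for a different algorithm, one that must solve $\sup_{\th}$ over the entire parameter space at every round --- computationally intractable for deep models, and also needed for Danskin's theorem to furnish an attained maximizer in the first place.

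The repair, which is exactly the paper's route, is to take the supremum only on the comparator side, never on the learner's side. The inequality extracted from the proof of Theorem~\ref{thm:fixed_meta_regret} is pointwise in $\th_t$:
\begin{equation*}
\R_w(T)\le 2\delta^{2}+\frac{2}{w^{2}T}\sum_{t=1}^{T}\|\nabla r_t(\th_t)-m(\th_t;\phi_t,t)\|^{2}
=2\delta^{2}+\frac{2}{w^{2}T}\sum_{t=1}^{T}h_t(\phi_t),
\end{equation*}
so no supremum is needed to control the learner's cumulative loss. EG then applies verbatim to the convex losses $h_t$ that Algorithm~\ref{alg:main_simple} actually differentiates (with $\|\nabla h_t\|_\infty=O(M^{2})$, proved just as in your bound), giving $\sum_{t}h_t(\phi_t)\le\min_{\phi\in S_b}\sum_{t}h_t(\phi)+O(M^{2}\sqrt{T\log b})$, and only at the very end does one relax the benchmark term, $\min_{\phi}\sum_{t}h_t(\phi)\le\min_{\phi}\sum_{t}\sup_{\th}\|\nabla r_t(\th)-m(\th;\phi,t)\|^{2}=Q(T;m^*)$, where the supremum is harmless because the comparator $\phi$ is fixed. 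This ordering --- supremum last, and only on the benchmark --- is the one idea your proposal is missing; with it, the rest of your write-up (the reduction, convexity, the $\ell_\infty$ gradient bound, and the tuning of $\eta_\phi$) goes through unchanged and yields the stated bound.
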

Theorem \ref{thm:opt_meta_regret} suggests that FGD with optimized MFGG is able to achieve the regret of Algorithm \ref{alg:fgd_wo_meta} using $m^*$ with $O(1/\sqrt{T})$ excessive error. As $T$ is usually large, we can see that the excessive error is small.
\section{Related Work}
\paragraph{Domain Generalization.}
Our problem can be viewed as an extension of the classic domain generalization
problem. In short, the classic domain generalization problem that is extensively studied in vision or NLP is \emph{one-shot} in the sense that it aims to generalize a model to one unseen target domain by training over multiple source domains. In contrast, our problem is \emph{$T$-shot}, since
we have a stream of $T$ pairs of target/source domains. The difference between one-shot and $T$-shot can be significant.
In the one-shot setting, we are unable to receive feedback on how the model generalizes on the unseen domain and thus the existing algorithms are hence focus on improving the worst-case generalization by learning domain-invariant representation based on methods such as domain feature alignment \citep{li2018domain,guo-etal-2019-towards}, causal learning \citep{arjovsky2019invariant,wang2022provable}, multi-task learning \citep{carlucci2019domain}, meta-learning \citep{balaji2018metareg,li2018learning} and data augmentation \citep{yan2020improve,ilse2021selecting}. In comparison, our algorithm mainly focuses on how to use the feedback in the $T$-shot setting to learn to predict the gradient information of the future unseen domain. While adopting the techniques from the one-shot domain generalization is of interest, the design of those algorithms utilizes a lot of domain knowledge from CV or NLP, making it non-trivial to apply to recommendation systems. We thus leave it for future work.

\paragraph{Continual Learning.}
Continual learning is a similar scenario where the goal is to learn an accurate model given a stream of different tasks/domains. Compared with multi-task learning \citep{sener2018multi,crawshaw2020multi,ye2021pareto,wang2021bridging}, the key challenge of continual learning is \textit{catastrophic forgetting} \citep{kirkpatrick2017overcoming}: the model forgets how to solve past tasks after it is exposed to new tasks. Various of types of solutions are proposed, including rehearsal-based methods \citep{lopez2017gradient,aljundi2019gradient,chaudhry2020using}, knowledge distillation \citep{rebuffi2017icarl}, regularization \citep{kirkpatrick2017overcoming,buzzega2020dark} and architecture adjustment \citep{rusu2016progressive,serra2018overcoming}. Although the learning scenario is similar, a direct application of continual learning methods to our setting might not give a desirable outcome. The reason is that the final goals of the two problems are quite different: continual learning aims to learn the current task without sacrificing the performance of the past learned tasks, while we only focus on performing well in the unobserved future task.

\paragraph{Gradual Domain Adaptation} Gradual domain adaptation (GDA) aims at adapting a model to an unlabeled target domain after being trained on a labeled source domain and a sequence of unlabeled intermediate domains. Despite being similar to the setting of temporal domain generalization, GDA is still different from the latter since there are no labels provided in the intermediate domains for GDA. A modern and common approach for GDA is gradual self-training \citep{kumar2020understanding,wang2022understanding,zhou2022online,dong2022algorithms}, which fits a model to the source domain and then adapts the model along the sequence of intermediate domains consecutively with self-training \citep{nigam2000analyzing}.

\paragraph{Meta-Learning.} Meta-learning, or learning-to-learn, aims to optimize the training process such that the outcome is improved. Examples of meta-learning includes learning a better initialization \citep{finn2017model,lee2018gradient}, optimizer \citep{andrychowicz2016learning,flennerhag2019meta}, hyper-parameter \citep{franceschi2018bilevel,chen2019lambdaopt} and network architecture \citep{liu2018darts,wang2022global}. The proposed FGD can be viewed as \textit{learning a better optimizer} for the temporal domain generalization problems. Meta-learning is also widely deployed in recommendation systems. Examples include solving cold start issue \citep{bharadhwaj2019meta,lee2019melu} through learning initialization and knowledge transferring through model fusion \citep{zhang2020retrain,peng2021learning}.

\section{Experiment}
\begin{table*}
\vspace{-0.1cm}
\begin{centering}
\scalebox{0.81}{
\begin{tabular}{l | llll | llll l}
\toprule
\multirow{2}{*}{Method} & \multicolumn{4}{c|}{FM} & \multicolumn{4}{c}{DeepFM}\tabularnewline
\cline{2-9} \cline{3-9} \cline{4-9} \cline{5-9} \cline{6-9} \cline{7-9} \cline{8-9} \cline{9-9} 
 & Auc-8 $\uparrow$ & Logloss-8 $\downarrow$ & Auc-16 $\uparrow$ & Logloss-16 $\downarrow$ & Auc-8 $\uparrow$ & Logloss-8 $\downarrow$ & Auc-16 $\uparrow$ & Logloss-16 $\downarrow$\tabularnewline
\hline 
IU & $60.35\pm0.54$ & $16.74\pm0.25$ & $60.56\pm0.61$ & $16.78\pm0.16$ & $60.48\pm0.47$ & $15.63\pm0.19$ & $60.62\pm0.60$ & $15.69\pm0.12$\tabularnewline
\hline 
BU-2 & $62.69\pm0.50$ & $16.04\pm0.19$ & $62.31\pm0.73$ & $16.15\pm0.20$ & $62.65\pm0.37$ & $15.21\pm0.15$ & $62.40\pm0.48$ & $15.22\pm0.13$\tabularnewline
SPMF-2 & $61.56\pm0.43$ & $18.30\pm0.31$ & $61.41\pm0.75$ & $18.48\pm0.20$ & $61.12\pm0.57$ & $15.74\pm0.21$ & $60.64\pm0.90$ & $15.65\pm0.13$\tabularnewline
ASMG-2 & $63.82\pm0.42$ & $16.51\pm0.28$ & $63.80\pm0.49$ & $16.54\pm0.19$ & $63.95\pm0.42$ & $15.00\pm0.19$ & $63.85\pm0.54$ & $\pmb{14.96\pm0.13}$\tabularnewline
Meta-2 & $\pmb{65.23\pm0.46}${*} & $\pmb{15.81\pm0.27}${*} & $\pmb{64.84\pm0.61}${*} & $\pmb{15.89\pm0.20}${*} & $\pmb{65.04\pm0.42}${*} & $\pmb{14.93\pm0.16}${*}& $\pmb{64.60\pm0.57}${*} & $\pmb{14.96\pm0.13}$\tabularnewline
\hline 
BU-3 & $63.55\pm0.46$ & $15.28\pm0.15$ & $63.40\pm0.64$ & $15.30\pm0.13$ & $63.65\pm0.40$ & $14.93\pm0.14$ & $63.41\pm0.51$ & $14.90\pm0.11$\tabularnewline
SPMF-3 & $60.73\pm0.55$ & $18.18\pm0.35$ & $61.00\pm0.87$ & $18.32\pm0.23$ & $61.83\pm0.54$ & $14.99\pm0.16$ & $61.32\pm0.62$ & $14.74\pm0.12$\tabularnewline
ASMG-3 & $63.21\pm0.49$ & $18.51\pm0.41$ & $63.35\pm0.69$ & $19.61\pm0.27$ & $65.02\pm0.41$ & $14.82\pm0.17$ & $64.77\pm0.53$ & $14.80\pm0.11$\tabularnewline
Meta-3 & $\pmb{67.20\pm0.25}${*} & $\pmb{15.09\pm0.18}${*} & $\pmb{67.05\pm0.38}${*} & $\pmb{15.10\pm0.14}${*} & $\pmb{66.92\pm0.26}${*} & $\pmb{14.65\pm0.15}${*} & $\pmb{66.78\pm0.37}${*} & $\pmb{14.62\pm0.11}$\tabularnewline
\hline 
BU-5 & $66.19\pm0.24$ & $14.76\pm0.18$ & $66.24\pm0.30$ & $14.71\pm0.13$ & $66.15\pm0.23$ & $14.54\pm0.15$ & $66.23\pm0.29$ & $14.49\pm0.11$\tabularnewline
SPMF-5 & $61.96\pm0.44$ & $14.69\pm0.13$ & $62.21\pm0.53$ & $14.74\pm0.10$ & $63.79\pm0.41$ & $14.83\pm0.18$ & $62.79\pm0.48$ & $14.53\pm0.13$\tabularnewline
ASMG-5 & $65.82\pm0.32$ & $14.79\pm0.14$ & $65.99\pm0.40$ & $14.79\pm0.11$ & $66.49\pm0.26$ & $14.50\pm0.14$ & $66.47\pm0.35$ & $14.50\pm0.10$\tabularnewline
Meta-5 & $\pmb{69.00\pm0.21}${*} & $\pmb{14.62\pm0.13}$ & $\pmb{69.37\pm0.19}${*} & $\pmb{14.61\pm0.11}$ & $\pmb{68.85\pm0.33}${*} & $\pmb{14.39\pm0.23}$ & $\pmb{69.15\pm0.28}${*} & $\pmb{14.38\pm0.22}$\tabularnewline
\bottomrule
\end{tabular}
}
\par\end{centering}
\centering{}
\vspace{-0.1cm}
\caption{Summarized result for CriteoTB. AUC/Logloss-x denotes the resulted based on the last x days examples. The averaged performance over three random seeds with its standard deviation are reported. We mainly compare the algorithm when the same $b$ is used and the best approach as bolded. The * denotes that the best result are statistically significant compared with the second best with p value less than 0.95 using matched-pair t-test.}\label{tbl:criteo}
\vspace{-0.3cm}
\end{table*}

We demonstrate the effectiveness of the proposed FGD.

\paragraph{Dataset.}
We consider two datasets CriteoTB and Avazu. CriteoTB has 13 integer feature fields and 26
categorical feature fields with around 800 million categorical tokens in total. It is the 24-day advertising data published by criteo. Training with the original CriteoTB dataset takes huge computational cost and to reduce computational overhead and increase reproducibility, we use a subsampled CriteoTB with 10\% of examples are sampled for evaluation. Avazu contains 11 days of clicks/not clicks data from Avazu and all its 22 feature fields are categorical. We preprocess both datasets following \citet{guo2017deepfm,liu2020learnable}.

\paragraph{Training Protocol.}
In real world recommendation systems, passing the examples multiple times for training might cause severe over-fitting issue \citep{zheng2020shadowsync,ye2020adaptive,du2021alternate}. Following \citet{zheng2020shadowsync,ye2020adaptive} we perform a single pass on the training data in the sense that each training example is only visited once throughout the training. Thus, we set $\th_{t}^0=\th_{t-b}$ during the model training at time $t$ because examples from domain $\D_{s}$, $s\le t-b$ has been visited for learning $\th_{t-b}$. In Algorithm \ref{alg:main}, the default scheme trains the recommendation models until the norm of the gradient is smaller than a threshold while in the experiment, we use the alternative strategy in which we train the model with a fixed number of iterations such that all the examples are passed exactly once.

\paragraph{Evaluation Protocol.}
As we consider an online learning environment, there is no need to split the dataset to training and testing subset. Instead, at the training time of $\th_t$, the data at the next day $\D_{t+1}$ is used to evaluate the performance of $f_{\th_t}$ and hence the domain generalization error is considered. Such evaluation protocol matches the real recommendation systems \citep{ye2020adaptive}. We adopt AUC (Area Under the ROC Curve) and Logloss to measure the performance. For Criteo1TB we evaluate the performance using the last 8 or 16 days and the first 16 or 8 days are considered to be offline training for warm up start. For Avazu, the first 3 days are treated to be offline training and hence only the last 8 days are used for evaluation. The metrics are averaged over all the days that are used for evaluation. For all the experimental settings, we run all the compared approaches 3 times with different random seeds and report the averaged result.

\paragraph{Models and Optimizers.}
We consider two representative architectures for recommendation models, FM \citep{rendle2010factorization} and DeepFM \citep{guo2017deepfm}. Following \citet{guo2017deepfm,liu2020learnable}, we use Adam as our optimizer and tune the learning rate for each compared methods from $\{0.01, 0.001, 0.0001, 0.00001\}$ using the performance of the offline training and the batch size is set to be 1024. For FGD, we add the model at the training trajectory into trajectory buffer every 150/50 iterations for CriteoTB/Avazu. The meta network is trained using SGD with learning rate 0.01 and batch size 20.

\paragraph{Baselines.}
For comparison, we consider the following optimization algorithms: Incremental Update (IU) \citep{wang2020practical} that updates the model incrementally only using the newly observed data $\D_t$; Batch Update (BU-$b$) \citep{wang2020practical} that updates the model using the most recent $b$ domains $\{\D_{t},...,\D_{t+1-b}\}$; Stream-centered Probabilistic Matrix Factorization (SPMF-$b$) \citep{wang2018streaming} in which a reservoir of historical examples are maintained to mix with the new data for current model updating. SPMF-$b$ denotes the setting that the example buffers has the same size as the number of examples in $b$ days; Adaptive Sequential Model Generation (ASMG-$b$) \citep{peng2021learning} that generates a better serving model from a sequence of $b$ most recent historical serving models via a meta generator; Future Gradient Descent (FGD-$b$) is our approach with the recent $b$ domains used for training the recommendation models.

\paragraph{Result.}
Table \ref{tbl:criteo} and \ref{tbl:avazu} summarized the results for CriteoTB and Avazu, respectively. The proposed FGD out-performs the baselines in most cases. We also observe that increasing $b$ improves the performance for most algorithms as more information can be utilized. The performance boost of FGD when increasing $b$ is more significant than other approaches.
Compared with CriteoTB, FGD is less significantly better in Avazu dataset. We think the reason might be that the domains of different days in Avazu are less different compared with that in CriteoTB.

\begin{table}
\centering{}%
\scalebox{0.73}{
\begin{tabular}{l| ll | ll }
\toprule 
\multirow{2}{*}{Method} & \multicolumn{2}{c|}{FM} & \multicolumn{2}{c}{DeepFM}\tabularnewline
\cline{2-5} \cline{3-5} \cline{4-5} \cline{5-5} 
 & Auc $\uparrow$ & Logloss $\downarrow$ & Auc $\uparrow$ & Logloss $\downarrow$\tabularnewline
\hline 
IU & $73.82\pm0.18$ & $39.92\pm0.86$ & $73.99\pm0.22$ & $39.80\pm0.81$\tabularnewline
\hline 
BU-2 & $74.16\pm0.25$ & $39.71\pm0.88$ & $74.31\pm0.21$ & $39.59\pm0.86$\tabularnewline
SPMF-2 & $69.31\pm0.31$ & $45.51\pm0.99$ & $71.11\pm0.53$ & $42.09\pm0.59$\tabularnewline
ASMG-2 & $\pmb{74.22\pm0.20}$ & $\pmb{39.66\pm0.89}$ & $\pmb{74.34\pm0.19}$ & $39.58\pm0.85$\tabularnewline
Meta-2 & $\pmb{74.22\pm0.28}$ & $39.77\pm0.90$ & $\pmb{74.34\pm0.21}$ & $\pmb{39.54\pm0.87}$\tabularnewline
\hline 
BU-3 & $74.17\pm0.31$ & $\pmb{39.68\pm0.89}$ & $74.50\pm0.30$ & $39.48\pm0.90$\tabularnewline
SPMF-3 & $68.95\pm0.56$ & $47.17\pm1.27$ & $71.93\pm0.24$ & $41.83\pm0.64$\tabularnewline
ASMG-3 & $73.64\pm0.08$ & $39.93\pm0.83$ & $73.95\pm0.17$ & $39.82\pm0.83$\tabularnewline
Meta-3 & $\pmb{74.20\pm0.27}${*} & $\pmb{39.68\pm0.89}$ & $\pmb{74.55\pm0.28}${*} & $\pmb{39.45\pm0.90}$\tabularnewline
\bottomrule 
\end{tabular}
}
\vspace{-0.1cm}
\caption{Summarized result for Avazu. The setting of the table is the same as that of Table \ref{tbl:criteo}.} \label{tbl:avazu}
\vspace{-0.1cm}
\end{table}

\paragraph{Temporal Domain Shift and Forecast Error of MFGG.}
\begin{figure}[t]
\begin{centering}
\includegraphics[scale=0.25]{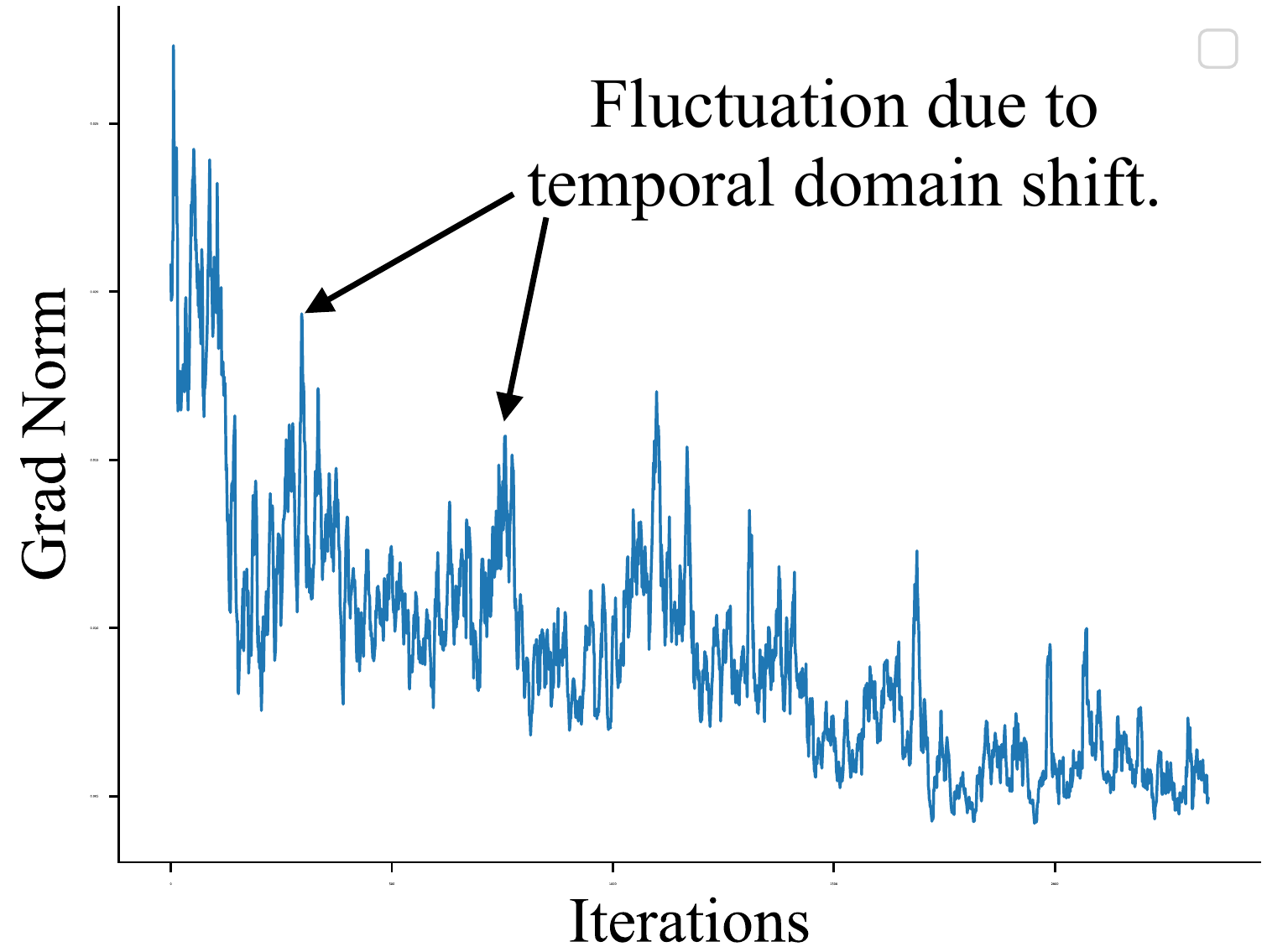}
\includegraphics[scale=0.25]{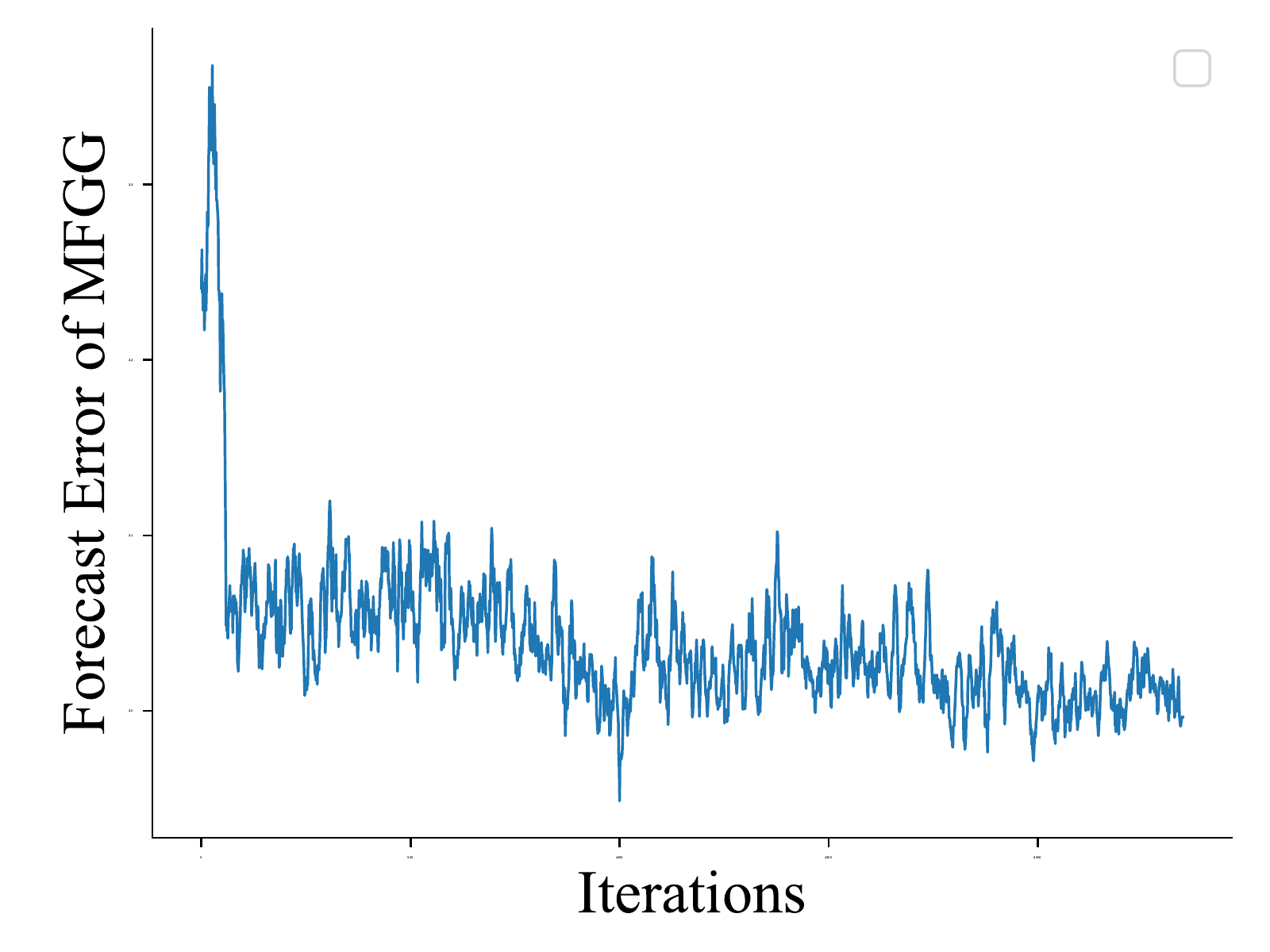}
\par\end{centering}
\caption{Left: evolution of $\|\nabla r_t(\th_{t,i})\|^2$. Right: the normalized forecast error of MFGG in different time and iterations.} \label{fig:analysis}
\vspace{-0.5cm}
\end{figure}
To visualize the effect of the temporal domain shift, we plot the gradient norm during the whole training process. We consider FGD-3 in CriteoTB with DeepFM as the recommendation models. In this examples, at each time $t$, the recommendation model is trained with $R=20K$ iterations. At time $t-1$, denote $\th_{t,i}$ as the parameter at the $i$-th iteration of the training (note that after the training $\th_{t}$ is used to predict examples in $\D_{t}$). We visualize the evolution of the gradient norm of the future domain $g_{t,i} = \|\nabla r_t(\th_{t,i})\|^2$ in a chronological order (i.e., $..., g_{t,1},...,g_{t,R}, g_{t+1,1},...,g_{t,R}, ...$) in the left subfigure of Fig \ref{fig:analysis}. Overall, $g$ is decreasing suggesting the improving performance but significant fluctuation of $g$ is also observed: when we shift from $t$ to $t+1$, $g$ will suddenly increase demonstrating a considerable deviation between the adjacent domains. We also visualize the (normalized) forecast error $e_{i,t}$ of MFGG in the right subfigure of Fig \ref{fig:analysis}
\[
e_{t,i}=\frac{\|m(\th_{t+1,i};\phi_{t},t)-\nabla r_{t+1}(\th_{t+1,i})\|^{2}}{\|\nabla r_{t+1}(\th_{t+1,i})\|^{2}}.
\]
Here, we normalize the error by the gradient norm $\|\nabla r_{t+1}(\th_{t+1,i})\|^{2}$ to rule out the effect of the decrease of gradient norm. We observe a decrease of the forecast error demonstrating that the gradient of future domain can be predicted using the past domains. Besides, the error remains stationary which provides evidence that the modeling the MFGG as a functional time-series model is reasonable.

\textbf{Optimizing MFGG with Random Model.}
When optimizing MFGG, the loss is calculated based on a model $f_\th$ sampled from its training trajectory so that we make MFGG focus on giving good prediction on the gradient of $f_\th$ that has reasonable performance. To show the importance of such design, we also run FGD in which MFGG is optimized using $f_\th$ with $\th$ randomly initialized. We consider the setting of FGD-3 in CriteoTB and use both FM and DeepFM as recommendation model and summarize the result in Table \ref{tbl:rand}. It can be shown that train the MFGG with random recommendation model degenrates the performance.

\begin{table}
\begin{centering}
\scalebox{0.68}{
\begin{tabular}{l|l|llll}
\toprule 
\multirow{1}{*}{Buffer} & Method & Auc-8 $\uparrow$ & Logloss-8 $\downarrow$ & Auc-16 $\uparrow$ & Logloss-16 $\downarrow$\tabularnewline
\hline 
\multirow{2}{*}{FM} & Rand & $67.08\pm0.28$ & $15.17\pm0.21$ & $67.08\pm0.41$ & $15.28\pm0.16$\tabularnewline
 & Traj & \pmb{$67.20\pm0.25$} & \pmb{$15.09\pm0.18$} & $67.05\pm0.38$ & \pmb{$15.10\pm0.14$}\tabularnewline
\hline 
\multirow{2}{*}{DeepFM} & Rand & $66.83\pm0.27$ & $14.68\pm0.16$ & $66.68\pm0.41$ & $14.66\pm0.12$\tabularnewline
 & Traj & \pmb{$66.92\pm0.26$} & $14.65\pm0.15$ & \pmb{$66.78\pm0.37$} & \pmb{$14.62\pm0.11$}\tabularnewline
\bottomrule 
\end{tabular}
}
\par\end{centering}
\centering{}
\vspace{-0.1cm}
\caption{Comparing the performance when MFGG is trained with model sampled from optimization trajectory (Traj) and randomly initialized model (Rand). The setting of the table is the same as that of Table \ref{tbl:criteo}.} \label{tbl:rand}
\vspace{-0.4cm}
\end{table}

\paragraph{Computation Overhead.}
We compare the wall clock training time of BU and FGD. We consider the DeepFM model in CriteoTB and report the averaged training time with different $b$ at each time $t$ in Table \ref{tbl:time}. It can be shown that the proposed FGD introduces only about 15\% overhead.
\begin{table}
\begin{centering}
\scalebox{0.83}{
\begin{tabular}{c|cc|cc|cc}
\toprule 
\multirow{2}{*}{Time/min} & BU-2 & Meta-3 & BU-3 & Meta-3 & BU-3 & Meta-23\tabularnewline
\cline{2-7} \cline{3-7} \cline{4-7} \cline{5-7} \cline{6-7} \cline{7-7} 
 & 20.4 & 24.3 & 29.7 & 33.8 & 47.6 & 52.2\tabularnewline
\bottomrule 
\end{tabular}
}
\par\end{centering}
\centering{}
\vspace{-0.1cm}
\caption{Comparing the wall clock training time of BU and FGD at each round ($t$).}\label{tbl:time}
\vspace{-0.4cm}
\end{table}
\section{Conclusion}
In this paper, we propose future gradient descent (FGD) that forecasts the gradient information of the future domain for training to address the issue of temporal domain shift in online recommendation systems. We show that FGD gives smaller temporal domain generalization in theory compared with a widely adopted algorithm, Batch Update. Empirical evidence is provided to show that FGD outperforms various representatives algorithms.
\clearpage

\begin{acknowledgements} %
This work is supported by grant from Meta Inc.
\end{acknowledgements}

\bibliography{reference}
\addtolength{\itemsep}{-1.5 em}

\appendix
\onecolumn
\title{Appendix: Future Gradient Descent for Adapting the Temporal Shifting Data Distribution in Online Recommendation Systems}
\maketitle

\paragraph{Extra Notation}
We introduce several new notations for the appendix. We use $\left\langle \cdot,\cdot\right\rangle$ to denote the inner product between two vectors and use $\circ$ to denote the entrywise product.

\section{Proof of Theorem \ref{thm:fixed_meta_regret}} \label{apx:generalized_fgd}
\begin{proof}
We start with a simple decomposition using the triangle inequality: 
\[
\|u_{w,t}(\th_{t})\|\le\|u_{w,t}(\th_{t})-\bar{m}(\th_{t};t)\|+\|\bar{m}(\th_{t};t)\|.
\]
By the termination condition of Algorithm \ref{alg:main_generalized}, we have $\|\bar{m}(\th_{t};t)\|\le\delta$.
Furthermore, it follows from \eqref{eq:GG_diff} that 
\begin{align*}
 \|u_{w,t}(\th_{t})-\bar{m}(\th_{t};t)\|
= & \frac{1}{w}\|\nabla r_{t}(\th_{t})-m(\th_{t};t)\|.
\end{align*}
Hence, we obtain 
\begin{equation}\label{eq:one_step_local_regret}
    \|u_{w,t}(\th_{t})\|^2 \le \left(\delta + \frac{1}{w}\|\nabla r_{t}(\th_{t})-m(\th_{t};t)\|\right)^2 
    \leq 2\delta^2 + \frac{2}{w^2}\|\nabla r_{t}(\th_{t})-m(\th_{t};t)\|^2. 
\end{equation}
This further implies that 
\begin{equation}\label{eq:regret_key}
\R_{w}(T)=\frac{1}{T}\sum_{t=1}^{T}\|u_{w,t}(\th_{t})\|^{2}  \le\frac{2}{w^{2}T}\sum_{t=1}^{T}\|\nabla r_{t}(\th_{t})-m(\th_{t};t)\|^{2}+2\delta^{2}, %
\end{equation}
and the main result follows from the fact that $\|\nabla r_{t}(\th_{t})-m(\th_{t};t)\|^{2} \leq \sup_{\th}\|\nabla r_{t}(\th)-m(\th;t)\|^{2}$ for all $t\in [T]$. Furthermore, under the boundedness assumption, we have for all $t\in [T]$ 
\begin{equation}
    \|\nabla r_{t}(\th_{t})-m(\th_{t};t)\|^{2} \leq \left(\|\nabla r_{t}(\th_{t})\|+\|m(\th_{t};t)\|\right)^2 \le 4M^2.  
\end{equation}
Hence, \eqref{eq:regret_key} also implies $\R_w(T) \leq {8M^2}/{w^2}+2\delta^2$, which leads to $\R_w(T)=O(1/w^2)$ when $\delta = 1/w$. 
\end{proof}

\section{Details of the Result in Section \ref{sec:opt_meta}} \label{apx:simple_fgd}
\paragraph{Algorithm.}
Given $\th_{t}$, define $h_{t}(\phi)=\|\nabla r_{t}(\th_{t})-m(\th_{t};\phi, t)\|^{2}$ as a function of $\phi$, where we view $\th_{t}$ as a constant. Thus, if follows from that \eqref{eq:regret_key} that 
\begin{equation}\label{eq:regret_in_ht}
    \R_w(T) \leq \frac{2}{w^2T} \sum_{t=1}^T h_t(\phi_t) + 2 \delta^2.
\end{equation}
Thus, our goal is to minimize $\sum_{t=1}^T h_t(\phi_t)$ in an \emph{online} manner, since we can only access $h_t(\phi_t)$ after $\phi_t$ is chosen. To achieve this, 
we use the classic exponentiated gradient method to update $\phi_t$. Specifically, for any $\phi = [a_1,\dots,a_b]\in S_{b}$, define the negative potential function $\psi(\phi)=\sum_{i=1}^{b}a_{i}\log a_{i}$
and its Bregman divergence
\begin{equation*}
    \B_{\psi}(\phi;\phi') = \psi(\phi)-\psi(\phi')-\left\langle \nabla\psi(\phi'),\phi-\phi'\right\rangle = \sum_{i=1}^b a_i \log \frac{a_i}{a_i'}.
\end{equation*}
Then $\phi_{t+1}$ is given by
\begin{align*}
\phi_{t+1} =\argmin_{\phi\in S_{b}}\left(\langle \nabla h_{t},\phi\rangle +\frac{1}{\eta_\phi}\B_{\psi}(\phi;\phi_{t})\right)
 =\frac{\phi_{t}\circ\exp(-\eta_\phi\nabla h_{t}(\phi_{t}))}{\|\phi_{t}\circ\exp(-\eta_\phi\nabla h_{t}(\phi_{t}))\|_{1}},
\end{align*}
where $\eta_{\phi}$ is the learning rate. 
See Section 6.6 in \citet{orabona2019modern} for the derivation of the last equality. Intuitively, $\frac{1}{\eta_\phi}\B_{\psi}(\phi;\phi_{t})$ stabilizes the algorithm by ensuring that $\phi_{t+1}$ remains close to $\phi_t$.

This simplified version of FGD is summarized in Algorithm \ref{alg:main_simple}. Note that when updating $\phi$, we only use the last recommendation model $\th_t$.

\begin{algorithm*}[t]
\caption{Generalized Future Gradient Descent for Smoothed Regret (simplified version for the theoretical study)}\label{alg:main_simple}
\begin{algorithmic}
\State \textbf{Input:} The learning rate $\eta$, $\eta_\phi$ for updating the model parameter $\th$ and $\phi$.
\State{Initialize $\phi_{1}=[1/b,...,1/b]$.}
\For{$t\in [T]$}
    \State{Deploy the prediction model $f_{\th_{t}}$ with the parameter $\th_t$ and collect the new dataset $\D_t$.}
    \State{Construct the function $h_t(\phi)=\|\nabla r_{t}(\th_{t})-m(\th_{t};\phi, t)\|^{2}$}
    \State{$\phi_{t+1}=\frac{\phi_{t}\circ\exp(-\eta_\phi\nabla h_{t}(\phi_{t}))}{\|\phi_{t}\circ\exp(-\eta_\phi\nabla h_{t}(\phi_{t}))\|_{1}}$.} \Comment{One step of Exponentiated gradient descent from $\phi_t$}
    \State{Initialize the model parameter $\th_{t+1}$.}
    \While{$\|{\color{black}\bar{m}(\th_{t+1};\phi_{t+1},t+1)}\|\ge\delta$}
        \State{$\th_{t+1} = \th_{t+1}-\eta {\color{black}\bar{m}(\th_{t+1};\phi_{t+1}, t+1)}$.}
    \EndWhile
\EndFor
\end{algorithmic}
\end{algorithm*}

\begin{lemma}\label{lem:bounded_gradient}
    Suppose that we have $\|\nabla r_t(\th)\|\leq M$ for all $\theta \in \Theta$ and $t$. Then $\|\nabla h_t(\phi)\|_{\infty} \leq 8 M^2$ for all $\phi\in S_b$. 
\end{lemma}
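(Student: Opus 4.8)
The plan is to compute the gradient of $h_t$ coordinatewise and then control each coordinate via Cauchy--Schwarz together with the simplex constraint on $\phi$. First I would unfold the definition of the linear gradient generator \eqref{eq:linear_model}: writing $g_0:=\nabla r_t(\th_t)$ and $g_i:=\nabla r_{t-i}(\th_t)$ for $i\in[b]$, we have $m(\th_t;\phi,t)=\sum_{i=1}^b a_i g_i$, so that $h_t(\phi)=\|g_0-\sum_{i=1}^b a_i g_i\|^2$. Since $\th_t$ is held fixed (recall that $\th$ is treated as a constant when differentiating with respect to $\phi$), $h_t$ is a smooth quadratic in the coefficient vector $\phi=[a_1,\dots,a_b]$, and the residual $v(\phi):=g_0-\sum_{i=1}^b a_i g_i$ depends affinely on $\phi$.

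Next I would differentiate. Using $\partial v/\partial a_j=-g_j$, the $j$-th partial derivative is $\partial h_t/\partial a_j=2\langle v(\phi),\partial v/\partial a_j\rangle=-2\langle v(\phi),g_j\rangle$, so that $\|\nabla h_t(\phi)\|_{\infty}=\max_{j\in[b]}2|\langle v(\phi),g_j\rangle|$. Applying Cauchy--Schwarz together with the assumption $\|g_j\|\le M$ gives $\|\nabla h_t(\phi)\|_{\infty}\le 2M\|v(\phi)\|$. It then remains to bound the residual norm $\|v(\phi)\|$, which is precisely where the simplex constraint enters: by the triangle inequality and the nonnegativity and normalization $\sum_{i=1}^b a_i=1$ of any $\phi\in S_b$, we get $\|v(\phi)\|\le\|g_0\|+\sum_{i=1}^b a_i\|g_i\|\le M+M\sum_{i=1}^b a_i=2M$. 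Combining the two bounds yields $\|\nabla h_t(\phi)\|_{\infty}\le 4M^2\le 8M^2$, which establishes the claim.

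There is no substantive obstacle here; the only point requiring care is that the bound on $\|\sum_{i=1}^b a_i g_i\|$ relies crucially on $\phi$ lying in the probability simplex rather than being an arbitrary coefficient vector. Without the constraints $a_i\ge 0$ and $\|\phi\|_1=1$, the magnitude of the generated gradient would not remain within the $O(M)$ range, and keeping it bounded is exactly the role played by the softmax/simplex parametrization described in Section~\ref{sec:method}; this uniform bound over $\phi\in S_b$ is in turn what the exponentiated gradient regret analysis in Theorem~\ref{thm:opt_meta_regret} will feed on. I also note that the stated constant $8M^2$ is loose: the argument above actually delivers the sharper bound $4M^2$, so the lemma follows a fortiori.
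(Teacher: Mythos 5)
Your proof is correct, and it reaches a sharper constant ($4M^2$ instead of $8M^2$) by a mildly different route. The paper's proof first invokes the simplex identity $\sum_{i=1}^b a_i=1$ to rewrite $h_t(\phi)=\bigl\|\sum_{i=1}^b a_i\bigl(\nabla r_t(\th_t)-\nabla r_{t-i}(\th_t)\bigr)\bigr\|^2$ and only then differentiates, so each partial derivative is an inner product of two convex combinations of gradient \emph{differences}, each bounded by $2M$, giving $8M^2$. You instead differentiate the residual form $h_t(\phi)=\|v(\phi)\|^2$, $v(\phi)=g_0-\sum_i a_i g_i$, directly, and use the simplex constraint only afterwards to bound $\|v(\phi)\|\le 2M$; Cauchy--Schwarz then gives $2M\|v(\phi)\|\le 4M^2$. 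A subtlety worth noting: the two computations bound \emph{different} vectors, since the paper's rewriting changes the extension of $h_t$ off the simplex --- the two gradients differ by the constant shift $2\langle g_0,v(\phi)\rangle\mathbf{1}$, which is generically nonzero even on $S_b$. Yours is literally the gradient of $h_t$ as defined in Algorithm \ref{alg:main_simple}, so your version is arguably the more faithful reading of the lemma. Either version serves the downstream analysis equally well: a shift of the gradient along the all-ones direction leaves the exponentiated-gradient update unchanged (the softmax normalization cancels it) and contributes nothing to $\langle\nabla h_t,\phi_t-\phi\rangle$ when $\phi_t,\phi\in S_b$, and your sharper constant only improves the $32\eta M^4 T$ term in the proof of Theorem \ref{thm:opt_meta_regret} by a constant factor, leaving the $O(M^2\sqrt{T\log b})$ regret unchanged.
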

\begin{proof}
    By definition, we have 
    \begin{equation*}
        h_t(\phi) = \|\nabla r_{t}(\th_{t})-\sum_{i=1}^b a_{i} \nabla r_{t-i}(\th_t)\|^2 = \|\sum_{i=1}^b a_{i} (\nabla r_{t}(\th_{t})- \nabla r_{t-i}(\th_t))\|^2, 
    \end{equation*}
    where we used the fact that $\sum_{i=1}^b a_{i}=1$. Direct computation shows that  
    \begin{align}
        \biggl|\frac{\partial h_t}{\partial a_{i}}(\phi)\biggr| &= 2\Bigl|\Bigl\langle \nabla r_{t}(\th_{t})- \nabla r_{t-i}(\th_t), \sum_{j=1}^b a_{j} (\nabla r_{t}(\th_{t})- \nabla r_{t-j}(\th_t))\Bigr\rangle\Bigr| \\
        & \leq 2\|\nabla r_{t}(\th_{t})- \nabla r_{t-i}(\th_t)\| \biggl\|\sum_{j=1}^b a_{j} (\nabla r_{t}(\th_{t})- \nabla r_{t-j}(\th_t))\biggr\| \label{eq:cauchy_schwarz}\\
        & \leq 2(\| \nabla r_{t}(\th_{t})\|+\| \nabla r_{t-i}(\th_t)\|) \biggl(\sum_{j=1}^b a_{j} (\|\nabla r_{t}(\th_{t})\|+\|\nabla r_{t-j}(\th_{t})\|)\biggr) \label{eq:triangle}\\
        & \leq 8M^2, \label{eq:bounded_gradient}
    \end{align}
    where we used Cauchy-Schwarz inequality in \eqref{eq:cauchy_schwarz}, the triangle inequality in \eqref{eq:triangle} and the boundedness of the gradients in \eqref{eq:bounded_gradient}. Hence, we conclude that $\|\nabla h_t(\phi)\|_{\infty} \leq 8M^2$.  
\end{proof}

\paragraph{Proof of Theorem \ref{thm:opt_meta_regret}.}

Now we proceed to the proof of Theorem \ref{thm:opt_meta_regret}. This is a standard result in the online learning literature (see, e.g., \citet{orabona2019modern}). For completeness, we present the proof below.

\begin{proof}
As $\psi$ is $\lambda$-strongly convex with $\lambda=1$, we have
\begin{equation}
    \B_{\psi}(\phi;\phi') \geq \frac{1}{2}\|\phi-\phi'\|_1^2.
\end{equation}

Throughout the proof, we slightly abuse the notation by writing $\eta_{\phi}=\eta$ and $\nabla h_{t}=\nabla h_{t}(\phi_{t})$ for simplicity. Notice that by our update rule $\phi_{t+1}$ is given by
\begin{align*}
\phi_{t+1} & =\argmin_{\phi\in S_{b}}\left(\eta\langle \nabla h_{t},\phi\rangle + \B_{\psi}(\phi;\phi_{t})\right).%
\end{align*}
From the first-order optimality condition, we get for any $\phi\in S_{b}$,
\begin{align*}
    &\langle \eta \nabla h_t +\nabla\psi( \phi_{t+1})-\nabla\psi(\phi_t), \phi_{t+1}-\phi \rangle \leq 0 \\
    \Leftrightarrow \qquad &\eta \langle \nabla h_t, \phi_t-\phi \rangle \leq \eta  \langle \nabla h_t, \phi_t-\phi_{t+1} \rangle + \langle \nabla\psi( \phi_{t+1})-\nabla\psi(\phi_t), \phi-\phi_{t+1}\rangle \\
    \Leftrightarrow \qquad &\eta \langle \nabla h_t, \phi_t-\phi \rangle \leq \eta  \langle \nabla h_t, \phi_t-\phi_{t+1} \rangle -\B_{\psi}(\phi;\phi_{t+1})+ \B_{\psi}(\phi;\phi_{t})-\B_{\psi}(\phi_{t+1};\phi_{t}),
\end{align*}
where we used the three-point equality \citep{chen1993convergence} in the last inequality. 
Furthermore, 
\begin{align*}
    \eta \langle \nabla h_t, \phi_t-\phi_{t+1} \rangle-\B_{\psi}(\phi;\phi_{t+1}) &\leq 
    \eta \|\nabla h_t\|_{\infty} \|\phi_t-\phi_{t+1}\|_1 -\frac{1}{2}\|\phi_t-\phi_{t+1}\|_1^2 \\
    &\leq \frac{\eta^2}{2}\|\nabla h_t\|_{\infty}^2 +\frac{1}{2}\|\phi_t-\phi_{t+1}\|_1^2-\frac{1}{2}\|\phi_t-\phi_{t+1}\|_1^2 \\
    &= \frac{\eta^2}{2}\|\nabla h_t\|_{\infty}^2.
\end{align*}
Combining these two bounds, we have 
\begin{align*}
\eta\left\langle \nabla h_{t},\phi_{t}-\phi\right\rangle  %
  \leq \B_{\psi}(\phi;\phi_{t})-\B_{\psi}(\phi;\phi_{t+1})+\frac{\eta^{2}}{2}\|\nabla h_{t}\|_{\infty}^{2}.
\end{align*}
Since $h_{t}(\phi)$ is convex in $\phi$, we have $h_{t}(\phi_{t})-h_{t}(\phi)\le\left\langle \nabla h_{t},\phi_{t}- \phi\right\rangle $ for any $ \phi\in S_{b}$. 
By telescoping, we obtain  
\begin{align*}
\sum_{t=1}^{T}(h_{t}(\phi_{t})-h_{t}(\phi)) & \le\sum_{t=1}^{T}\left\langle \nabla h_{t},\phi_{t}-\phi\right\rangle \\
 & \le\frac{1}{\eta}\sum_{t=1}^{T}\left[\B_{\psi}(\phi;\phi_{t})-\B_{\psi}(\phi;\phi_{t+1})+\frac{\eta^2}{2}\|\nabla h_{t}\|_{\infty}^{2}\right]\\
 & =\frac{1}{\eta}(\B_{\psi}(\phi;\phi_{1})-\B_{\psi}(\phi;\phi_{T+1}))+\frac{\eta}{2}\sum_{t=1}^{T}\|\nabla h_{t}\|_{\infty}^{2}\\
 & \le\frac{1}{\eta}\log b+{32\eta}M^4 T.
\end{align*}
where we used Lemma~\ref{eq:bounded_gradient}, $\B_{\psi}(\phi;\phi_{T+1})\geq 0$ and $\B_{\psi}(\phi;\phi_{1})=\psi(\phi)+\log b\le\log b$ in the last inequality. 
Choosing $\eta=c\sqrt{(\log b)/(TM^{4})}$ with some constant $c>0$
leads to  
\begin{align} \label{eq:meta_diff}
\sum_{t=1}^{T}[h_{t}(\phi_{t})-h_{t}(\phi)]\le O(M^2\sqrt{T\log b}).
\end{align}
Note that \eqref{eq:meta_diff} holds for any $\phi \in S_b$. In particular, we can set $\phi=\phi^*$ defined by $\phi^{*}=\argmin_{\phi\in S_{b}} \sum_{t=1}^{T} h_t(\phi)$. Therefore, 
\begin{align*}
    \sum_{t=1}^{T} h_t(\phi_t) & \le\sum_{t=1}^{T}h_t(\phi^*)+O(M^2\sqrt{T\log b})\\
     & = \min_{\phi\in S_b}\sum_{t=1}^{T}\|\nabla r_{t}(\th_t)-m(\th_t;\phi,t)\|^{2}+O(M^2\sqrt{T\log b}) \\
     & \leq \min_{\phi\in S_b}\sum_{t=1}^{T} \sup_{\th }\|\nabla r_{t}(\th)-m(\th;\phi,t)\|^{2}+O(M^2\sqrt{T\log b})
     = \min_{m\in\mathcal{M}}Q[T;m]+O(M^2\sqrt{T\log b}).
\end{align*}

We thus conclude from \eqref{eq:regret_in_ht} that 
\[
\R_{w}(T)\le \frac{2}{w^{2}T}(\min_{m\in\mathcal{M}}Q[T;m]+O(M^2\sqrt{T\log b})) + 2\delta^{2}.
\]
\end{proof}

\section{A Practical Generalized FGD algorithm.} \label{apx:generalize_fgd}

\begin{algorithm*}[t]
\caption{Generalized Future Gradient Descent for Smoothed Loss}\label{alg:main_generalized}
\begin{algorithmic}
\State \textbf{Input:} The learning rate $\eta$, $\eta_\phi$ for updating the model parameter $\th$ and $\phi$. The initial trajectory buffer $B$.
\For{$t\in [T]$}
    \State{Deploy the prediction model $f_{\th_{t}}$ with parameter $\th_t$. Then collect the new dataset $\D_t$.}
    \State{Initialize the parameter of MFGG $\phi_{t+1}$.} \Comment{Initialization of $\phi_{t+1}$ is user-specific.}       
    \For{Inner loop iteration $k\in K$} \Comment{Update the meta network.}
        \State{$\phi_{t+1} \leftarrow \phi_{t+1}-\eta_{\phi}\sum_{\th\in B}\nabla_{\phi}\|m(\th;\phi_{t+1},t)-\nabla r_{t}(\th)\|^{2}$.} \Comment{May replace with the mini-batch version.}
    \EndFor
    \State{Initialize the trajectory buffer $B = \emptyset$ and model parameter $\th_{t+1}$.} \Comment{Initialization scheme of $\th_{t+1}$ is specified by user.}
    \While{$\|{\bar{m}(\th_{t+1};\phi_{t+1},t+1)}\|\ge\delta$} \Comment{Alternatively, we may run gradient descent with a fixed number of iterations.}
        \State{$\th_{t+1} \leftarrow \th_{t+1}-\eta {m(\th_{t+1};\phi_{t+1},t+1)}$.} \Comment{May replace with the mini-batch version.}
        \State{$B \leftarrow B \cup \{ \th_{t+1} \}$} \Comment{Alternatively, we may update the trajectory buffer $B$ every a few iterations.}
    \EndWhile
\EndFor
\end{algorithmic}
\end{algorithm*}

Compared with FGD in Algorithm \ref{alg:main}, we use a smoothed version of MFGG $\bar{m}$ for training, which is due to the consideration of minimizing a smoothed loss in (\ref{eq:dynamic_regret_smooth}). For completeness, we also summarize the practical algorithm of the generalized version of FGD in Algorithm \ref{alg:main_generalized}.

\end{document}